\def\ci{\perp\!\!\!\perp}
\newcommand{\cov}{\mathrm{cov}}
\newcommand{\tr}{\mathrm{tr}}
\newcommand{\bs}{\boldsymbol}
\DeclareMathOperator*{\argmax}{arg\,max}
\newtheorem{lemma}{Lemma}
\newtheorem{theorem}{Theorem}
\newtheorem{corollary}{Corollary}
\newtheorem{mydef}{Definition}
\newtheorem{algorithm}{Algorithm}
\title{Distributional Equivalence and Structure Learning for Bow-free~Acyclic~Path~Diagrams}
\author{Christopher Nowzohour\footnote{Corresponding Author}\\Seminar f\"ur Statistik, ETH Z\"urich\\nowzohour@stat.math.ethz.ch \and Marloes H.\ Maathuis\\Seminar f\"ur Statistik, ETH Z\"urich\\maathuis@stat.math.ethz.ch \and Robin J.\ Evans\footnote{Was supported by the institute for mathematical research (FIM) during a visit to ETH Z\"urich}\\Department of Statistics, University of Oxford\\evans@stats.ox.ac.uk \and Peter B\"uhlmann\\Seminar f\"ur Statistik, ETH Z\"urich\\buhlmann@stat.math.ethz.ch}
\begin{document}

\bibliographystyle{abbrvnat2}

\maketitle

\begin{abstract}
We consider the problem of structure learning for bow-free acyclic path diagrams (BAPs). BAPs can be viewed as a generalization of linear Gaussian DAG models that allow for certain hidden variables. We present a first method for this problem using a greedy score-based search algorithm. We also prove some necessary and some sufficient conditions for distributional equivalence of BAPs which are used in an algorithmic approach to compute (nearly) equivalent model structures. This allows us to infer lower bounds of causal effects. We also present applications to real and simulated datasets using our publicly available R-package.
\end{abstract}

\section{Introduction}
We consider learning the causal structure among a set of variables from observational data. In general, the data can be modelled with a structural equation model (SEM) over the observed and unobserved variables, which expresses each variable as a function of its direct causes and a noise term, where the noise terms are assumed to be mutually independent. The structure of the SEM can be visualized as a directed graph, with vertices representing variables and edges representing direct causal relationships. We assume the structure to be recursive (acyclic), which results in a directed acyclic graph (DAG). DAGs can be understood as models of conditional independence, and many structure learning algorithms use this to find all DAGs which are compatible with the observed conditional independencies \citep{ SpiPA93}. Often, however, not all relevant variables are observed. The resulting marginal distribution over the observed variables might still satisfy some conditional independencies, but in general these will not have a DAG representation \citep{ RicTS02}. Also, there generally are additional constraints resulting from the marginalization of some of the variables \citep{ EvaR16, ShpI14}.

In this paper we consider a model class which can accommodate certain hidden variables. Specifically, we assume that the graph over the observed variables is a bow-free acyclic path diagram (BAP). This means it can have directed as well as bidirected edges (with the directed part being acyclic), where the directed edges represent direct causal effects, and the bidirected edges represent hidden confounders. The bow-freeness condition means there cannot be both a directed and a bidirected edge between the same pair of variables. The BAP can be obtained from the underlying DAG over all (hidden and observed) variables via a latent projection operation \citep{ PeaJ00} (if the bow-freeness condition admits this). We furthermore assume a parametrization with linear structural equations and Gaussian noise, where two noise terms are correlated only if there is a bidirected edge between the two respective nodes. In certain situations, it is beneficial to consider this restricted class of hidden variable models, as it forms a middle ground between DAG models that don't allow any hidden variables and maximal ancestral graph (MAG) models \citep{RicTS02} that allow arbitrarily many and general hidden variables. Such a restricted model class, if not heavily misspecified, results in a smaller distributional equivalence class, and estimation is expected to be more accurate than for more general hidden variable methods like FCI \citep{ SpiPA93}, RFCI \citep{ ColDA12}, or FCI+ \citep{ ClaTA13}.

The goal of this paper is structure learning with BAPs, that is, finding the set of BAPs that best explains some observational data. Just like in other models, there is typically an equivalence class of BAPs that are statistically indistinguishable, so a meaningful structure search result should represent this equivalence class. We propose a penalized likelihood score that is greedily optimized and a heuristic algorithm (supported by some theoretical results) for finding equivalent models once an optimum is found. This method is the first of its kind for BAP models.

\subsection*{Example of a BAP}
Consider the DAG in Figure~\ref{fig:motivatingexample.dag}, where we observe variables $X_1, \ldots, X_4$, but do not observe $H_1, H_2, H_3$. The only (conditional) independency over the observed variables is $X_1 \ci X_3 ~|~ X_2$, which is also represented in the corresponding BAP in Figure~\ref{fig:motivatingexample.bap}. The parametrization of this BAP would be
\begin{align*}
  X_1 &= \epsilon_1 \\
  X_2 &= B_{21} X_1 + \epsilon_2 \\
  X_3 &= B_{32} X_2 + \epsilon_3 \\
  X_4 &= B_{43} X_3 + \epsilon_4
\end{align*}
with $(\epsilon_1, \epsilon_2, \epsilon_3, \epsilon_4)^T \sim \mathcal{N}(\mathbf{0}, \Omega)$ and
\begin{align*}
  \Omega =
    \begin{pmatrix}
      \Omega_{11} & 0 & 0 & 0 \\
      0 & \Omega_{22} & 0 & \Omega_{24} \\
      0 & 0 & \Omega_{33} & 0 \\
      0 & \Omega_{24} & 0 & \Omega_{44}
    \end{pmatrix}.
\end{align*}
Hence the model parameters in this case are $B_{21}$, $B_{32}$, $B_{43}$, $\Omega_{11}$, $\Omega_{22}$, $\Omega_{33}$, $\Omega_{44}$, and $\Omega_{24}$. An example of an acyclic path diagram that is not bow-free is shown in Figure~\ref{fig:motivatingexample.mag}.

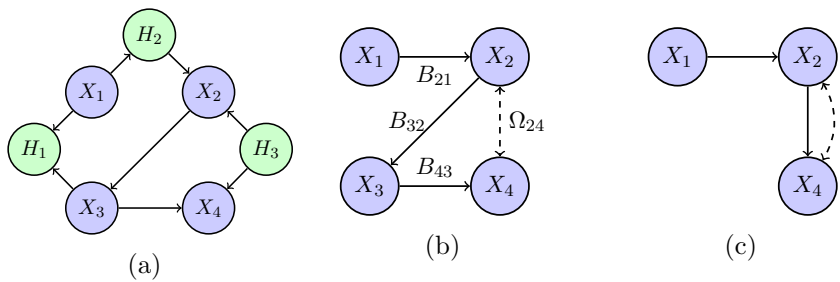
\begin{figure}
  \centering
  \begin{subfigure}{0.3\textwidth}
    \centering
    \resizebox{!}{3cm}{
    \begin{tikzpicture}[->,node distance=2.1cm,thick,
                        main node/.style={circle,fill=blue!20,draw},
                        hidden node/.style={circle,fill=green!20,draw}]
      \node[main node] (1) {$X_1$};
      \node[main node] (2) [right of=1] {$X_2$};
      \node[main node] (3) [below of=1] {$X_3$};
      \node[main node] (4) [right of=3] {$X_4$};
      \node[hidden node] (5) [below right=0.5 of 2] {$H_3$};
      \node[hidden node] (6) [above right=0.5 of 1] {$H_2$};
      \node[hidden node] (7) [below left=0.5 of 1] {$H_1$};

      \path[every node/.style={font=\sffamily}]
        (1) edge node {} (7)
        (1) edge node {} (6)
        (2) edge node {} (3)
        (3) edge node {} (4)
        (3) edge node {} (7)
        (5) edge node {} (2)
        (5) edge node {} (4)
        (6) edge node {} (2);
    \end{tikzpicture}}
    \caption{}
    \label{fig:motivatingexample.dag}
  \end{subfigure}
  \begin{subfigure}{0.3\textwidth}
    \centering
    \resizebox{!}{2.5cm}{
    \begin{tikzpicture}[->,node distance=2.1cm,thick,
                        main node/.style={circle,fill=blue!20,draw}]
      \node[main node] (1) {$X_1$};
      \node[main node] (2) [right of=1] {$X_2$};
      \node[main node] (3) [below of=1] {$X_3$};
      \node[main node] (4) [right of=3] {$X_4$};

      \path[every node/.style={font=\sffamily}]
        (1) edge node [below] {$B_{21}$} (2)
        (2) edge node [left] {$B_{32}$} (3)
            edge [dashed,<->] node [right] {$\Omega_{24}$} (4)
        (3) edge node [above] {$B_{43}$} (4);
    \end{tikzpicture}}
    \caption{}
    \label{fig:motivatingexample.bap}
  \end{subfigure}
  \begin{subfigure}{0.3\textwidth}
    \centering
    \resizebox{!}{2.5cm}{
    \begin{tikzpicture}[->,node distance=2.1cm,thick,
                        main node/.style={circle,fill=blue!20,draw}]
      \node[main node] (1) {$X_1$};
      \node[main node] (2) [right of=1] {$X_2$};
      %\node[main node] (3) [below of=1] {$X_3$};
      \node[main node] (4) [right of=3] {$X_4$};

      \path[every node/.style={font=\sffamily}]
        (1) edge node {} (2)
        %(2) edge node {} (3)
        (2)    edge node {} (4)
            edge [dashed,bend left,<->] node {} (4);
        %(3) edge node {} (4);
    \end{tikzpicture}}
    \caption{}
    \label{fig:motivatingexample.mag}
  \end{subfigure}
  \caption{(a) DAG with hidden variables $H_1, H_2, H_3$, (b) resulting BAP over the observed variables $X_1, \ldots, X_4$ with annotated edge weights, and (c) resulting graph if $X_3$ is also not observed, which is not a BAP.}
  \label{fig:motivatingexample}
\end{figure}

\subsection*{Challenges}
The main challenge, like with all structure search problems in graphical modelling, is the vastness of the model space. The number of BAPs grows super-exponentially. Hence, as is the case for DAGs, exhaustively scoring all BAPs and finding the global score optimum is very challenging. For DAGs, \citet{SilTM06} proposed a surprisingly simple algorithm whose runtime is exponential in the number of nodes and which is feasible for problems with up to about 30 nodes. However, extending their idea to BAPs is not straightforward, and we aim to deal with settings where the number of nodes can be significantly larger.

Another major challenge, specifically for our setting, is the fact that a graphical characterization of the (distributional) equivalence classes for BAP models is not yet known. In the DAG case, for example, it is known that models are equivalent if and only if they share the same skeleton and v-structures \citep{ VerTP91}. A similar result is not known for BAPs (or the more general acyclic directed mixed graphs). This makes it hard to traverse the search space efficiently, since one cannot search over the equivalence classes (like the greedy equivalence search for DAGs, see \citet{ ChiDB02}). It also makes it difficult to evaluate simulation results, since the graphs corresponding to the ground truth and the optimal solution may be distinct and yet still represent the same model.

\subsection*{Contributions}
We provide the first structure learning algorithm for BAPs. It is a score-based algorithm and uses greedy hill climbing to optimize a penalized likelihood score. We are able to achieve a significant computational speedup by decomposing the score over the bidirected connected components of the graph and caching the score of each component. To mitigate the problem of local optima, we perform many random restarts of the greedy search.

We propose to approximate the distributional equivalence class of a BAP by using a greedy strategy for likelihood scoring. If two BAPs are similar with respect to their penalized likelihoods within a tolerance, they should be treated as statistically indistinguishable and hence as belonging to the same class of (nearly) equivalent BAPs. Based on such greedily computed (near) equivalence classes, we can then infer bounds of total causal effects, in the spirit of \citet{ MaaMA09, MaaMA10}.

We present some theoretical results towards equivalence properties in BAP models, some of which generalize to acyclic path diagrams. In particular, we prove some necessary and some sufficient conditions for BAP equivalence. Furthermore, we present a Markov Chain Monte Carlo method for uniformly sampling BAPs based on ideas from \citet{ KuiJM15}.

We obtain promising results on simulated data sets despite the challenges listed above. Comparing the highest-scoring BAPs and DAGs on real datasets exemplifies the more conservative nature of BAP models.

\subsection*{Related Work}
There are two main research communities that intersect at this topic. On the one side there are the path diagram models, going back to \citet{ WriS34} and then being mainly developed in the behavioral sciences \citep{ JorK70, DunO75, GlyCS86, JorK01}. In this setting a model for the edge functions is assumed, usually a parametric model with linear edge functions and Gaussian noise. In a very general formulation, the graph over the observed variables is assumed to be an acyclic directed mixed graph (ADMG), which can have bows. While in general the parameters for these models are not identified, \citet{ DrtMA11} give necessary and sufficient conditions for global identifiability. Complete necessary and sufficient conditions for the more useful almost everwhere identifiability remain unknown (however, see \citet{FoyRA12} for some necessary and some sufficient conditions). BAP models are a useful subclass, since they are almost everywhere identified \citep{ BriCP02}. \citet{ DrtMA09} provided an algorithm, called residual iterative conditional fitting (RICF), for maximum likelihood estimation of the parameters for a given BAP.

On the other side there are the non-parametric hidden variable models, which are defined as marginalized DAG models \citep{ PeaJ00}\footnote{Strictly speaking, not all SEMs with correlated Gaussian errors can be interpreted as latent variable models, since the latent variable models have additional inequality constraints.  We do not discuss this further here, but see \citet{FoxCA15} for more details.}. The marginalized distributions are constrained by conditional independencies, as well as additional equality and inequality constraints \citep{ EvaR16}. When just modelling the conditional independence constraints, the class of maximal ancestral graphs (MAGs) is sufficient \citep{ RicTS02}. \citet{ ShpI14} have proposed the nested Markov model using ADMGs to also include the additional equality constraints. Finally, mDAGs were introduced to model all resulting constraints \citep{ EvaR16}. In general BAPs induce independence constraints and also Verma constraints \citep[Sections 7.3 and 8]{RicTS02}, as well as other restrictions that do not apply in the non-parametric case.  The BAP in Figure \ref{fig:motivatingexample.bap}, for example, implies a Verma constraint.  Gaussian BAPs are also `maximal', in the sense that every missing edge induces a constraint.  In the non-parametric case, with each additional layer of constraints learning the graphical structure from data becomes more complicated, but at the same time more available information is utilized and a possibly more detailed structure can be learned.  In the Gaussian case, however, all models are parameteric, and fitting BAPs that do not correspond to conditional independence models is essentially no different to fitting those that do.  At the graphical level the search is perhaps easier, since we do not have to place the restriction of ancestrality on the structure of the graph.  However, unlike for MAGs, the equivalence class of BAPs is not known, which means that one may end up fitting the same model multiple times in the form of different graphs. Furthermore, BAPs are easier to interpret as hidden variable models. This can be seen when comparing the BAP in Figure~\ref{fig:motivatingexample.bap} with the corresponding MAG. The latter would have an additional edge between $X_1$ and $X_4$ since there is no (conditional) independency of these two variables. As can be verified, the BAP and the MAG in this example are not distributionally equivalent, since the former encodes additional non-independence constraints.

Structure search for MAGs can be done with the FCI \citep{ SpiPA93}, RFCI \citep{ MaaMA09}, or FCI+ \citep{ ClaTA13} algorithms. \citet{ SilRG06} propose a fully Bayesian method for structure search in linear Gaussian ADMGs, sampling from the posterior distribution using an MCMC approach. \citet{ ShpIA12} employ a greedy approach to optimize a penalized likelihood over ADMGs for discrete parametrizations.

\subsection*{Outline of this Paper}
In Section~\ref{sec:model} we give an in-depth overview of the model and its estimation from data, as well as some distributional equivalence properties. In Section~\ref{sec:greedy} we present the details of our greedy algorithm with various computational speedups. In Section~\ref{sec:empirical} we present empirical results on simulated and real datasets. All proofs as well as further theoretical results and justifications can be found in the Appendix.

\section{Model and Estimation} \label{sec:model}
\subsection{Graph Terminology}
Let $X_1, \ldots, X_d$ be a set of random variables and $V=\{1, \ldots, d\}$ be their index set. The elements of $V$ are also called \emph{nodes} or \emph{vertices}. A \emph{mixed graph} or \emph{path diagram} $G$ on $V$ is an ordered tuple $G=(V, E_D, E_B)$ for some $E_D, E_B \subseteq V \times V \setminus \{ (i,i) ~|~ i \in V \}$. If $(i,j) \in E_D$, we say there is a \emph{directed edge} from $i$ to $j$ and write $i \rightarrow j \in G$. If $(i,j) \in E_B$, we must also have $(j,i) \in E_B$, and we say there is a \emph{bidirected edge} between $i$ and $j$ and write $i \leftrightarrow j \in G$. The set $\mathrm{pa}_G(i) \coloneqq \{ j ~|~ j \rightarrow i \in G \}$ is called the \emph{parents} of $i$. This definition extends to sets of nodes $S$ in the obvious way: $\mathrm{pa}_G(S) \coloneqq \bigcup_{i \in S} \mathrm{pa}_G(i)$. The \emph{in-degree} of $i$ is the number of arrowheads at $i$. If $V' \subseteq V$, $E'_D \subseteq E_D|_{V' \times V'}$, and $E'_B \subseteq E_B|_{V' \times V'}$, then $G' = (V', E'_D, E'_B)$ is called a \emph{subgraph} of $G$, and we write $G' \subseteq G$. The \emph{induced subgraph} $G_W$ for some vertex set $W \subset V$ is the restriction of $V$ to vertices $W$. If $G' \subseteq G$ but $G' \neq G$, we call $G'$ a \emph{strict subgraph} of $G$ and write $G' \subset G$. The \emph{skeleton} of $G$ is the undirected graph over the same node set $V$ and with edges $i-j$ if and only if $i \rightarrow j \in G$ or $i \leftrightarrow j \in G$ (or both).

A \emph{path} $\pi$ between $i$ and $j$ is an ordered tuple of (not necessarily distinct) nodes $\pi = (v_0 = i, \ldots, v_l = j)$ such that there is an edge between $v_k$ and $v_{k+1}$ for all $k=0, \ldots, l-1$. If the nodes are distinct, the path is called \emph{non-overlapping} (note that in the literature a path is mostly defined as non-overlapping). The \emph{length} of $\pi$ is the number of edges $\lambda(\pi) = l$. If $\pi$ consists only of directed edges pointing in the direction of $j$, it is called a \emph{directed path} from $i$ to $j$. A node $j$ on a non-overlapping path $\pi$ is called a \emph{collider} if $\pi$ contains a non-overlapping subpath $(i,j,k)$ with two arrowheads into $j$\footnote{That is, one of the following structures: $\rightarrow \leftarrow, \leftrightarrow \leftarrow, \rightarrow \leftrightarrow, \leftrightarrow \leftrightarrow$.}. Otherwise $j$ is called a non-collider on the path. If $j$ is a collider on a non-overlapping path $(i,j,k)$, we call $(i,j,k)$ a \emph{collider triple}. Moreover, if $(i,j,k)$ is a collider triple and $i$ and $k$ are not adjacent in the graph, then $(i,j,k)$ is called a \emph{v-structure}. A path without colliders is called a \emph{trek}.

Let $A,B,C \subset V$ be three disjoint sets of nodes. The set $\mathrm{an}(C) \coloneqq C \cup \{ i \in V ~|~ \text{there exists a directed path from } i \text{ to } c \text{ for some } c \in C\}$ is called the \emph{ancestors} of $C$. A non-overlapping path $\pi$ from $a \in A$ to $b \in B$ is said to be \emph{m-connecting given $C$} if every non-collider on $\pi$ is not in $C$ and every collider on $\pi$ is in $\mathrm{an}(C)$. If there are no such paths, $A$ and $B$ are \emph{m-separated given $C$}, and we write $A \ci_m B ~|~ C$. We use a similar notation for denoting conditional independence of the corresponding set of variables $X_A$ and $X_B$ given $X_C$: $X_A \ci X_B ~|~ X_C$.

A graph $G$ is called \emph{cyclic} if there are at least two distinct nodes $i$ and $j$ such that there are directed paths both from $i$ to $j$ and from $j$ to $i$. Otherwise $G$ is called \emph{acyclic} or \emph{recursive}. An acyclic path diagram is also called an acyclic directed mixed graph (ADMG). An acyclic path diagram having at most one edge between each pair of nodes is called a \emph{bow-free\footnote{The structure $i \rightarrow j$ together with $i \leftrightarrow j$ is also known as \emph{bow}.} acyclic path diagram (BAP)}. An ADMG without any bidirected edges is called a \emph{directed acyclic graph (DAG)}.

\subsection{The Model}
A \emph{linear structural equation model (SEM)} $M$ is a set of linear equations involving the variables $\mathbf{X} = (X_1, \ldots, X_d)^T$ and some error terms $\bm{\epsilon} = (\epsilon_1, \ldots, \epsilon_d)^T$:
\begin{align} \label{eq:sem.vec}
  \mathbf{X} = B \mathbf{X} + \bm\epsilon,
\end{align}
where $B$ is a real matrix, $\mathrm{cov}(\bm\epsilon) = \Omega$ is a positive semi-definite matrix, and we assume that all variables $\mathbf{X}$ have been normalized to mean zero. $M$ has an associated graph $G$ that reflects the structure of $B$ and $\Omega$. For every non-zero entry $B_{ij}$ there is a directed edge from $j$ to $i$, and for every non-zero entry $\Omega_{ij}$ there is a bidirected edge between $i$ and $j$. Thus we can also write~\eqref{eq:sem.vec} as:
\begin{align} \label{eq:sem}
  X_i = \sum_{j \in \mathrm{pa}_G(i)} B_{ij} X_j + \epsilon_i, \qquad \text{for all } i \in V,
\end{align}
with $\cov (\epsilon_i, \epsilon_j) = \Omega_{ij}$ for all $i,j \in V$.

Our model is a special type of SEM, which we refer to as \emph{Gaussian BAP model}\footnote{All BAP models in this paper are assumed to have a Gaussian parametrization unless otherwise stated.}. In particular, we make the following assumptions:
\begin{enumerate}
  \renewcommand{\theenumi}{(A\arabic{enumi})}
  \renewcommand{\labelenumi}{\theenumi}
  \item \label{ass:normal}
  The errors $\bm\epsilon$ follow a multivariate Normal distribution $\mathcal{N} (\mathbf{0}, \Omega)$.
  \item \label{ass:bap}
  The associated graph $G$ is a BAP.
\end{enumerate}
Assumption~\ref{ass:normal} is not strictly needed for our equivalence results, but we rely on it for fitting the models in practice using the RICF method of \citet{ DrtMA09}.

Often $M$ is specified via its graph $G$, and we are interested to find parameters $\theta_G$ compatible with $G$. We thus define the parameter spaces for the edge weight matrices $B$ (directed edges) and $\Omega$ (bidirected edges) for a given BAP $G$ as
\begin{align*}
  \mathcal{B}_G = \{ B \in \mathbb{R}^{d \times d} ~|~& B_{ij} = 0~\mathrm{if}~j \rightarrow i~\mathrm{is~not~an~edge~in}~G \} \\
  \mathcal{O}_G = \{ \Omega \in \mathbb{R}^{d \times d} ~|~& \Omega_{ij} = 0~\mathrm{if}~ i \ne j \mathrm{~and~} i \leftrightarrow j~\mathrm{is~not~an~edge~in}~G; \\
  & \Omega \text{~is~symmetric~positive~semi-definite} \}
\end{align*}
and the combined parameter space as
\begin{align*}
  \Theta_G = \mathcal{B}_G \times \mathcal{O}_G.
\end{align*}
The covariance matrix for $\mathbf{X}$ is given by:
\begin{align} \label{eq:covmat}
  \phi (\theta) = (I-B)^{-1} \Omega (I-B)^{-T},
\end{align}
where $\phi:\Theta_G \rightarrow \mathcal{S}_G$ maps parameters to covariance matrices, and $\mathcal{S}_G \coloneqq \phi(\Theta_G)$ is the set of covariance matrices compatible with $G$.
% To see this, note that
%\begin{align*}
%  \cov (\mathbf{X}) &= \E [ \mathbf{XX}^T ] = \E [ (I-B)^{-1} \bm\epsilon \bm\epsilon^T (I-B)^{-T} ] \\
%  &= (I-B)^{-1} \Omega (I-B)^{-T}.
%\end{align*}
Note that $\phi (\theta)$ exists since $G$ is acyclic by~\ref{ass:bap} and therefore $I-B$ is invertible.

We assume that the variables are normalized to have variance $1$, that is, we are interested in the subset $\bar{\mathcal{S}}_G \subset \mathcal{S}_G$, where $\bar{\mathcal{S}}_G = \{ \Sigma \in \mathcal{S}_G ~|~ \Sigma_{ii} = 1 \text{ for all } i = 1, \ldots, d \}$, and its preimage under $\phi$, $\bar{\Theta}_G \coloneqq \phi^{-1} \left( \bar{\mathcal{S}}_G \right) \subset \Theta_G$.

One of the main motivations of working with BAP models is parameter identifiability. This is defined below:

\begin{mydef} \label{identifiable}
  A normalized parameter $\theta_G \in \bar{\Theta}_G$ is identifiable if there is no $\theta_G' \in \bar{\Theta}_G$ such that $\theta_G \ne \theta_G'$ and $\phi(\theta_G) = \phi(\theta_G')$.
\end{mydef}

\citet{ BriCP02} show that for any BAP $G$, the set of normalized non-identifiable parameters has measure zero.

The causal interpretation of BAPs is the following. A directed edge from $X$ to $Y$ represents a direct causal effect of $X$ on $Y$. A bidirected edge between $X$ and $Y$ represents a hidden variable which is a cause of both $X$ and $Y$, see also Figure~\ref{fig:motivatingexample}. In practice, one is often interested in predicting the effect of an intervention at $X_j$ on another variable $X_i$. This is called the \emph{total causal effect} of $X_j$ on $X_i$ and can be defined as $E_{ij} = \frac{\partial}{\partial x} \mathbb{E}[X_i ~|~ do(X_j=x)]$, where the $do(X_j=x)$ means replacing the respective equation in~\eqref{eq:sem} with $X_j=x$ \citep{ PeaJ00}. For linear Gaussian path diagrams such as in \eqref{eq:sem.vec} or \eqref{eq:sem}, this is a constant quantity given by
\begin{align} \label{eq:causaleffect}
  E_{ij} = \left( (I-B)^{-1} \right)_{ij}.
\end{align}

\subsection{Penalized Maximum Likelihood}
Consider a BAP $G$. A first objective is to estimate the parameters $\theta_G$ from $n$ i.i.d.\ samples of model~\eqref{eq:sem}, denoted by $\{ x_i^{(s)} \}$ ($i=1,\ldots ,d$ and $s=1,\ldots ,n$). This can be done by maximum likelihood estimation using the RICF method of \citet{ DrtMA09}. Given the Gaussianity assumption~\ref{ass:normal} and the covariance formula~\eqref{eq:covmat}, one can express the log-likelihood for some given parameters $\theta_G$ and the sample covariance matrix $S$ as:
\begin{align} \label{eq:likelihood}
  l(\theta_G; S) = -\frac{n}{2} \left( \log |2 \pi \Sigma_G| + \frac{n-1}{n} \tr (\Sigma_G^{-1} S) \right),
\end{align}
where $\Sigma_G = \phi(\theta_G)$ is the covariance matrix implied by parameters $\theta_G$, see for example \citet[(4.1.9)]{ MarKA79}. However, due to the structural constraints on $B$ and $\Omega$ it is not straightforward to maximize this for $\theta_G$. RICF is an iterative method to do so, yielding the maximum likelihood estimate:
\begin{align} \label{eq:maxlikelihood}
  \hat{\theta}_G = \argmax_{\theta_G \in \Theta_G} l(S; \theta_G).
\end{align}

We now extend this to the scenario where the graph $G$ is also unknown, using a regularized likelihood score with a BIC-like penalty term that increases with the number of edges. Concretely, we use the following score for a given BAP $G$:
\begin{align} \label{eq:score}
  s(G) \coloneqq \frac{1}{n} \left( \max_{\theta_G \in \Theta_G} l(S; \theta_G) - \left( \# \{ \mathrm{nodes} \} + \# \{ \mathrm{edges} \} \right) \log n \right).
\end{align}
We have scaled the log-likelihood and penalty with $1/n$ so that the score is expected to be $O(1)$ as $n$ increases. Compared with the usual BIC penalty, we chose our penalty to be twice as large, since this led to better performance in simulations studies\footnote{In practice, one could also treat the penalty coefficient as a hyperparameter and choose it via cross-validation.}. The number of nodes is typically fixed, so does not matter for comparing graphs over the same vertex set. We included it to make explicit the penalization of the model parameters (which correspond to nodes and edges).

In our search for the true causal graph $G$, we assume that if $\Sigma \in \bar{\mathcal{S}}_{H}$ for any other graph $H$, then $\bar{\mathcal{S}}_{H} \supseteq \bar{\mathcal{S}}_{G}$, that is $H$ represents a strict supermodel of $G$.  This rules out the possibility that `by chance' we land on a distribution contained in a submodel, and is a minimal requirement for causal learning. The set of matrices that violate the requirement has measure zero within the model $\bar{\mathcal{S}}_{G}$ (assuming entries in $B$ and $\Omega$ are generated according to a positive joint density with respect to the Lebesgue measure)\footnote{This follows because the models are parametrically defined algebraic varieties, which are therefore irreducible.  Any sub-variety of $\bar{\mathcal{S}}_G$, such as that achieved by intersecting with another model is either equal to $\bar{\mathcal{S}}_G$ or has strictly smaller dimension.  See, for example, \citet{CoxDA07}.}.  This requirement is analogous to the faithfulness assumption of \citet{SpiPA93}, though faithfulness applies separately to individual conditional independence constraints rather than to the entire model.

\subsection{Equivalence Properties}
There is an important issue when doing structure learning with graphical models: typically the maximizers of~\eqref{eq:score} will not be unique. This is a fundamental problem for most model classes and a consequence of the model being underdetermined. In general, there are sets of graphs that are statistically indistinguishable (in the sense that they can all parametrize the same joint distributions over the variables). These graphs are called \emph{distributionally equivalent}. For nonparametric DAG models (without non-linearity or non-Gaussianity constraints), for example, the distributional equivalence classes are characterized by conditional independencies and are called Markov equivalence classes. For BAPs, distributional equivalence is not completely characterized yet (see \citet{ SpiPA98} or \citet{WilL12} for a discussion of the linear Gaussian ADMG case), but we present some necessary and some sufficient conditions, that can be used to simplify structure search in practice. Let us first make precise the different notions of model equivalence.
\begin{mydef}
  Two BAPs $G_1, G_2$ over a set of nodes $V$ are Markov equivalent if they imply the same m-separation relationships.
\end{mydef}
This essentially means they imply the same conditional independencies, and the definition coincides with the classical notion of Markov equivalence when $G_1$ and $G_2$ are both DAGs. The following notion of distributional equivalence is stronger.
\begin{mydef}
  Two BAPs $G_1, G_2$ are distributionally equivalent if $\bar{\mathcal{S}}_{G_1} = \bar{\mathcal{S}}_{G_2}$.
\end{mydef}

We now present some sufficient and some necessary conditions for distributional equivalence in BAP models. Note that the Gaussianity assumption~\ref{ass:normal} is not required for these to hold.

\subsubsection{Necessary Conditions}
\citet{ SpiPA98} showed the following global Markov property for general linear path diagrams: if there are nodes $a,b \in V$ and a possibly empty set $C \subset V$ such that $a \ci_m b ~|~ C$, then the partial correlation of $X_a$ and $X_b$ given $X_C$ is zero. In addition, if such an m-separation does not hold then the partial correlation is non-zero for almost all distributions. As a direct consequence, we get the following first result:

\begin{lemma} \label{lemma:m-sep}
  If two BAPs $G_1, G_2$ do not share the same m-separations, they are not distributionally equivalent.
\end{lemma}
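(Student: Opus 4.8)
The plan is to argue by contraposition, exploiting the asymmetry between the two halves of the \citet{SpiPA98} global Markov property quoted just above. Suppose $G_1$ and $G_2$ are distributionally equivalent, i.e.\ $\bar{\mathcal{S}}_{G_1} = \bar{\mathcal{S}}_{G_2}$, and suppose for contradiction that they do \emph{not} induce the same m-separations. Then some triple is m-separated in one graph but not the other; without loss of generality there are nodes $a, b \in V$ and a possibly empty set $C \subseteq V \setminus \{a, b\}$ with $a \ci_m b ~|~ C$ in $G_1$ but $a \not\ci_m b ~|~ C$ in $G_2$. Write $\rho_{ab \cdot C}(\Sigma)$ for the partial correlation of $X_a$ and $X_b$ given $X_C$ under a covariance matrix $\Sigma$; recall this is a rational function of the entries of $\Sigma$.

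First apply the Markov half of the property to $G_1$: since $a \ci_m b ~|~ C$ holds in $G_1$, \emph{every} $\Sigma \in \bar{\mathcal{S}}_{G_1}$ satisfies $\rho_{ab \cdot C}(\Sigma) = 0$. Now apply the faithfulness half to $G_2$: since $a \not\ci_m b ~|~ C$ in $G_2$, the map $\theta \mapsto \rho_{ab \cdot C}(\phi(\theta))$ is not identically zero on $\Theta_{G_2}$ (in fact it is nonzero off a measure-zero set). Hence there is a parameter $\theta^\star \in \Theta_{G_2}$ with $\rho_{ab \cdot C}(\phi(\theta^\star)) \neq 0$, and one may choose $\theta^\star$ generic so that $\Sigma^\star \coloneqq \phi(\theta^\star)$ is positive definite. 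Rescaling each variable $X_i$ by $1/\sqrt{\Sigma^\star_{ii}}$ preserves the BAP $G_2$ together with the zero patterns of $B$ and $\Omega$, sends $\Sigma^\star$ to a normalized matrix in $\bar{\mathcal{S}}_{G_2}$, and leaves the (scale-invariant) partial correlation $\rho_{ab \cdot C}$ unchanged, hence nonzero. This renormalized matrix lies in $\bar{\mathcal{S}}_{G_2} = \bar{\mathcal{S}}_{G_1}$ yet has $\rho_{ab \cdot C} \neq 0$, contradicting the conclusion of the previous step. Therefore $G_1$ and $G_2$ share the same m-separations, which is the contrapositive of the lemma.

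I expect the only real (and rather mild) subtlety to be this last transfer from the ambient parameter space $\Theta_{G_2}$, where \citet{SpiPA98} state their `almost all distributions' claim, to the normalized covariance set $\bar{\mathcal{S}}_{G_2}$ that appears in the definition of distributional equivalence: one has to note that $\rho_{ab\cdot C}\circ \phi$ is a nonzero rational function on $\Theta_{G_2}$, so its zero set is a proper (measure-zero, nowhere-dense) subset, and that diagonal rescaling of the variables maps $\mathcal{S}_{G_2}$ into itself without changing any partial correlation, so a generic point can be renormalized into $\bar{\Theta}_{G_2}$ while keeping $\rho_{ab\cdot C}\neq 0$. Everything else is an immediate consequence of the two quoted facts, and the argument is genuinely one-sided --- it invokes the `for all distributions' direction for $G_1$ and only the `for almost all distributions' direction for $G_2$ --- which is exactly why a one-directional hypothesis on m-separations (a single differing triple) already forces distributional inequivalence.
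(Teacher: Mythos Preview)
Your argument is correct and is exactly the reasoning the paper has in mind: the paper does not give a separate proof of this lemma at all, but states it as ``a direct consequence'' of the two halves of the \citet{SpiPA98} result quoted immediately before it. Your write-up simply spells out that direct consequence, including the (indeed mild) normalization step from $\mathcal{S}_{G_2}$ to $\bar{\mathcal{S}}_{G_2}$, which the paper leaves implicit.
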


Unlike for DAGs, the converse is not true, as the counterexample in Figure~\ref{fig:counterexample} shows. For DAGs it is trivial to show that having the same skeleton is necessary for Markov equivalence, since a missing edge between two nodes means they can be d-separated, and thus a conditional independency would have to be present in the corresponding distribution. For BAPs a missing edge does not necessarily result in an m-separation, as the counterexample in Figure~\ref{fig:counterexample} shows. However, the following result will allow us to improve the necessary condition of same m-separations for BAPs to the same as for DAGs.

\begin{theorem} \label{thm:subgraph}
Let $G$ and $G'$ be distributionally equivalent BAPs on vertices $V$. Then, for any subset $W \subseteq V$, the induced subgraphs $G_W$ and $G'_W$ are also distributionally equivalent.
\end{theorem}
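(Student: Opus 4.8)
The plan is to show that distributional equivalence is preserved under marginalization, by relating the covariance model of the induced subgraph $G_W$ to the covariance model of the full graph $G$. The key observation is that $\bar{\mathcal{S}}_{G_W}$ should be exactly the set of $W \times W$ principal submatrices of matrices in $\bar{\mathcal{S}}_G$. If I can establish this marginalization identity for an arbitrary BAP, then the result follows immediately: since $\bar{\mathcal{S}}_G = \bar{\mathcal{S}}_{G'}$, taking $W \times W$ submatrices on both sides gives $\bar{\mathcal{S}}_{G_W} = \bar{\mathcal{S}}_{G'_W}$.

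So the real content is the claim that for any BAP $G$ on $V$ and any $W \subseteq V$, a normalized covariance matrix $\Sigma_W$ lies in $\bar{\mathcal{S}}_{G_W}$ if and only if it is the $W$-submarginal of some $\Sigma \in \bar{\mathcal{S}}_G$. One direction — that every marginal of a distribution in the $G$-model lies in the $G_W$-model — is the statement that the SEM/path-diagram class is closed under marginalization down to an induced subgraph, provided the induced subgraph is still a BAP (which it is: deleting vertices cannot create bows or cycles). To see this direction concretely, I would order $V$ so that $W$ comes first and the vertices to be marginalized come last, and then successively eliminate the last vertex: eliminating a sink-like vertex from a Gaussian SEM produces another Gaussian SEM whose graph is obtained by the standard latent-projection rules (replace paths through the removed vertex by directed or bidirected edges). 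The subtlety is that latent projection in general can create bows, but here we are not claiming $G_W$ equals the latent projection — only that $\bar{\mathcal{S}}$ of the latent projection is contained in $\bar{\mathcal{S}}_{G_W}$ when $G_W$ is a BAP, because any edge the projection would add between two vertices already adjacent in $G_W$ (the only way a bow could appear) just lands inside the existing, more flexible parameter space of $G_W$; and when they are non-adjacent in $G_W$ the projection adds no edge at all because there was no edge in $G$ either. I would need to check this bookkeeping carefully — that is the first place the argument could go wrong.

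The converse direction — that every element of $\bar{\mathcal{S}}_{G_W}$ extends to an element of $\bar{\mathcal{S}}_G$ — is where I expect the main obstacle. Given parameters $(B_W, \Omega_W) \in \Theta_{G_W}$ realizing some $\Sigma_W$, I want to build parameters $(B, \Omega) \in \Theta_G$ whose implied covariance restricts to $\Sigma_W$ on $W$. The natural construction is to keep $B_W, \Omega_W$ on the $W$-block and set all edge coefficients incident to $V \setminus W$ to zero, i.e. give the extra vertices independent unit-variance noise and no incoming or outgoing edges; then $\Sigma$ block-decomposes as $\Sigma_W$ on $W$ and the identity on $V \setminus W$, which is visibly a valid normalized covariance matrix compatible with $G$ (zeroing out parameters always stays in the parameter space, since $\Theta_G$ is defined by which entries are allowed to be nonzero, and the identity block is PSD). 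This shows $\bar{\mathcal{S}}_{G_W}$ is contained in the set of $W$-marginals of $\bar{\mathcal{S}}_G$, which is all we need for the converse. So in fact the set equality above can be weakened to two containments that are each individually easy, and neither requires the full latent-projection machinery — the marginalization-closure direction is the one that takes care.

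Putting it together: for any $\Sigma_W \in \bar{\mathcal{S}}_{G_W}$, extend by the identity block to get $\Sigma \in \bar{\mathcal{S}}_G = \bar{\mathcal{S}}_{G'}$; then restrict $\Sigma$ back to $W$ — using the marginalization-closure direction applied to $G'$ — to conclude $\Sigma_W \in \bar{\mathcal{S}}_{G'_W}$. By symmetry the reverse inclusion holds, so $\bar{\mathcal{S}}_{G_W} = \bar{\mathcal{S}}_{G'_W}$, which is the definition of distributional equivalence of $G_W$ and $G'_W$. I would relegate the one genuinely technical lemma — that $\bar{\mathcal{S}}_G$ restricted to $W$ is contained in $\bar{\mathcal{S}}_{G_W}$, via vertex-by-vertex elimination in the SEM — to the appendix, as it is essentially the Gaussian case of the well-known fact that marginalizing a (mixed) graphical model along an induced subgraph stays within the model class, with the bow-freeness of $G_W$ ensuring we never leave the BAP family.
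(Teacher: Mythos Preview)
Your proposal has a genuine gap in the ``marginalization-closure'' direction. The claim that the $W$-marginals of $\bar{\mathcal{S}}_G$ are contained in $\bar{\mathcal{S}}_{G_W}$ is false in general: take $G$ to be the path $1 \to 3 \to 2$ and $W = \{1,2\}$. Then $G_W$ is the empty graph on two vertices, so $\bar{\mathcal{S}}_{G_W}$ contains only the $2\times 2$ identity, yet generic $\Sigma \in \bar{\mathcal{S}}_G$ have $\Sigma_{12} \neq 0$. The error is in your bookkeeping: you assert that latent projection ``adds no edge at all'' between vertices non-adjacent in $G$, but adding such edges is exactly what latent projection does --- it inserts edges along paths through the eliminated vertices. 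So the lemma you propose to relegate to the appendix is simply false, and your final ``restrict $\Sigma$ back to $W$'' step, which invokes it for $G'$, does not go through.

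The overall strategy is not repairable by marginalization alone. What survives is your ``extend by identity'' step, which shows that $\mathcal{S}_{G_W}$ (embedded as $\tilde{\mathcal{S}}_G^W$) sits inside $\mathcal{T}_W \cap \mathcal{S}_G$, where $\mathcal{T}_W$ is the slice of covariance matrices with identity block on $\overline{W}$ and zero cross-block. The hard part is the reverse containment, and the paper handles it by an algebraic-geometric argument rather than by marginalization: it first proves that the parametrization map is uniquely invertible at $\Sigma = I$ (Lemma~\ref{lem:ident}), deduces that in a neighbourhood of $I$ every point of $\mathcal{T}_W \cap \mathcal{S}_G$ already lies in $\tilde{\mathcal{S}}_G^W$ (Corollary~\ref{cor:neig}), and then uses irreducibility of the variety $\tilde{\mathcal{S}}_G^W$ to conclude that the entire irreducible component of $\mathcal{T}_W \cap \mathcal{S}_G$ through $I$ equals $\tilde{\mathcal{S}}_G^W$. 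Since this component is determined by $\mathcal{S}_G$ alone, distributional equivalence of $G$ and $G'$ forces $\tilde{\mathcal{S}}_G^W = \tilde{\mathcal{S}}_{G'}^W$, hence $\mathcal{S}_{G_W} = \mathcal{S}_{G'_W}$. The local identifiability at $I$ and the passage from local to global via irreducible components are the two ideas your proposal is missing.
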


If we in particular look at the induced subgraphs of size two and three we obtain the following necessary conditions for distributional equivalence.

\begin{corollary} \label{cor:necessary}
Let $G$ and $G'$ be distributionally equivalent BAPs. Then they have the same skeleton and v-structures.
\end{corollary}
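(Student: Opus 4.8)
The plan is to derive the corollary from Theorem~\ref{thm:subgraph} by passing to induced subgraphs on two and three vertices, where distributional equivalence forces equality of m-separations by Lemma~\ref{lemma:m-sep} and the graphs are small enough that this already pins down the relevant structure.

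For the skeleton, I would fix a pair $\{i,j\}\subseteq V$, set $W=\{i,j\}$, and note that by Theorem~\ref{thm:subgraph} the subgraphs $G_W$ and $G'_W$ are distributionally equivalent, hence share their m-separations by Lemma~\ref{lemma:m-sep}. A two-vertex BAP with no edge between $i$ and $j$ satisfies $i\ci_m j\mid\emptyset$, whereas one with any single edge ($i\to j$, $j\to i$, or $i\leftrightarrow j$) has the m-connecting path $(i,j)$, which contains no collider, so $i\not\ci_m j\mid\emptyset$. Thus adjacency of $i$ and $j$ is detected by the m-separation relation of the two-vertex induced subgraph, and since this agrees for $G$ and $G'$, the edge $i-j$ lies in the skeleton of $G$ iff it lies in the skeleton of $G'$; as $\{i,j\}$ was arbitrary, the skeletons coincide. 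It is worth stressing that restricting to two vertices is what makes this work: in a general BAP a missing edge need not yield an m-separation, so Lemma~\ref{lemma:m-sep} applied to $G$ and $G'$ directly would not deliver the skeleton.

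For the v-structures, suppose $(i,k,j)$ is a v-structure of $G$, so $i$ and $j$ are non-adjacent in $G$ while $i-k$ and $k-j$ are edges and $k$ is a collider on $(i,k,j)$. Having already established that $G$ and $G'$ have the same skeleton, I know that in $G'$ the vertices $i$ and $j$ are non-adjacent while $i-k$ and $k-j$ are edges, so with $W=\{i,j,k\}$ both $G_W$ and $G'_W$ have skeleton $i-k-j$. On such a three-vertex graph the only non-overlapping path from $i$ to $j$ is $(i,k,j)$, so $i\ci_m j\mid\{k\}$ holds exactly when $k$ is a non-collider on it and fails exactly when $k$ is a collider. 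Applying Theorem~\ref{thm:subgraph} and Lemma~\ref{lemma:m-sep} again, $G_W$ and $G'_W$ have the same m-separations, so $k$ must be a collider on $(i,k,j)$ in $G'$ as well, i.e.\ $(i,k,j)$ is a v-structure of $G'$. Running the same argument with the roles of $G$ and $G'$ interchanged gives the converse, so $G$ and $G'$ have the same v-structures.

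The hard part is in fact none of the above: the genuine work is carried by Theorem~\ref{thm:subgraph}, and what remains is only small-case bookkeeping. The one thing to be careful about is the enumeration---checking that on two vertices \emph{every} single-edge configuration destroys the marginal m-separation, and that on three vertices with skeleton $i-k-j$ each collider pattern at $k$ (among $\rightarrow\leftarrow$, $\leftrightarrow\leftarrow$, $\rightarrow\leftrightarrow$, $\leftrightarrow\leftrightarrow$) is distinguished from each non-collider pattern precisely by whether $i\ci_m j\mid\{k\}$ fails. Both checks are immediate from the definition of m-connection, since on these vertex sets there is a unique relevant path to inspect.
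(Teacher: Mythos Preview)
Your proof is correct and follows essentially the same approach as the paper: reduce to two- and three-vertex induced subgraphs via Theorem~\ref{thm:subgraph}, then use Lemma~\ref{lemma:m-sep} to read off adjacency and collider status from m-separations. The only cosmetic difference is that for the v-structure step the paper tests the \emph{marginal} m-separation of the two non-adjacent endpoints, whereas you test the m-separation \emph{conditional on the middle vertex}; both characterize collider status on a three-vertex path equally well.
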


Since m-separations are not fully determined by the skeleton and the v-structures of a graph, it is also worthwhile to look at larger subgraphs. This leads, for example, to the following result: if two graphs are distributionally equivalent and a particular path is a so-called discriminating path in both graphs, then the discriminated triple will be a collider in both or in neither \citep[see][Section 3.4]{AliA09}.  

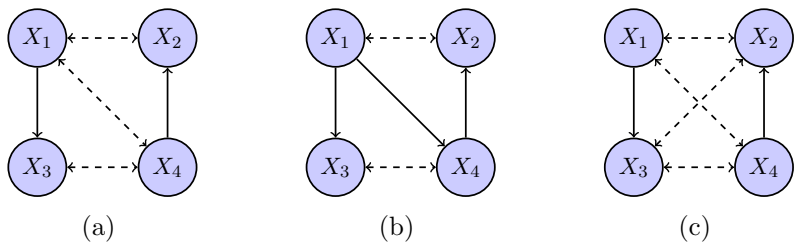
\begin{figure}
  \centering
  \begin{subfigure}{0.3\textwidth}
    \centering
    \resizebox{!}{2.5cm}{
    \begin{tikzpicture}[->,node distance=2.1cm,thick,
                        main node/.style={circle,fill=blue!20,draw}]
      \node[main node] (1) {$X_1$};
      \node[main node] (2) [right of=1] {$X_2$};
      \node[main node] (3) [below of=1] {$X_3$};
      \node[main node] (4) [below of=2] {$X_4$};

      \path[every node/.style={font=\sffamily}]
        (1) edge [dashed,<->] node {} (2)
            edge node {} (3)
            edge [dashed,<->] node {} (4)
        (3) edge [dashed,<->] node {} (4)
        (4) edge node {} (2);
    \end{tikzpicture}}
    \caption{}
    \label{fig:example.nonmaximal}
  \end{subfigure}
  \begin{subfigure}{0.3\textwidth}
    \centering
    \resizebox{!}{2.5cm}{
    \begin{tikzpicture}[->,node distance=2.1cm,thick,
                        main node/.style={circle,fill=blue!20,draw}]
      \node[main node] (1) {$X_1$};
      \node[main node] (2) [right of=1] {$X_2$};
      \node[main node] (3) [below of=1] {$X_3$};
      \node[main node] (4) [below of=2] {$X_4$};

      \path[every node/.style={font=\sffamily}]
        (1) edge [dashed,<->] node {} (2)
            edge node {} (3)
            edge node {} (4)
        (3) edge [dashed,<->] node {} (4)
        (4) edge node {} (2);
    \end{tikzpicture}}
    \caption{}
    \label{fig:example.msep}
  \end{subfigure}
  \begin{subfigure}{0.3\textwidth}
    \centering
    \resizebox{!}{2.5cm}{
    \begin{tikzpicture}[->,node distance=2.1cm,thick,
                        main node/.style={circle,fill=blue!20,draw}]
      \node[main node] (1) {$X_1$};
      \node[main node] (2) [right of=1] {$X_2$};
      \node[main node] (3) [below of=1] {$X_3$};
      \node[main node] (4) [below of=2] {$X_4$};

      \path[every node/.style={font=\sffamily}]
        (1) edge [dashed,<->] node {} (2)
            edge node {} (3)
            edge [dashed,<->] node {} (4)
        (3) edge [dashed,<->] node {} (4)
            edge [dashed,<->] node {} (2)
        (4) edge node {} (2);
    \end{tikzpicture}}
    \caption{}
    \label{fig:example.maximal}
  \end{subfigure}
  \caption{The two BAPs in (a) and (b) share the same skeleton and v-structures, but in (a) there are no m-separations, whereas in (b) we have $X_2 \ci_m X_3 ~|~ \{X_1, X_4\}$. BAPs (a) and (c) share the same m-separations (none) but are not distributionally equivalent since they have different skeletons (using Corollary~\ref{cor:necessary}).}
  \label{fig:counterexample}
\end{figure}

The criteria given above are not complete, in the sense that there exist BAPs that are not distributionally equivalent and yet this cannot be proven by applying these results.  For example, the BAPs in Figure \ref{fig:counterexample2} are not distributionally equivalent, which can be shown using the results of \citet{ShpI14}.  However, they both have no m-separations. A complete characterization remains an open problem.

\subsubsection{Sufficient Conditions}
To prove sufficient conditions, we first give a characterization of the equivalence class in terms of treks (collider-free paths) using Wright's path tracing formula \citep{ WriS60}. Wright's formula expresses the covariance between any two variables in a path diagram as the sum-product over the edge labels of the treks between those variables, as long as all variables are normalized to variance~$1$. A precise statement as well as a proof of a more general version of Wright's formula can be found in the Appendix (Theorems \ref{thm:unstandardised} and \ref{thm:wright}). As an example, consider the BAP in Figure~\ref{fig:motivatingexample.bap}. There are two treks between $X_2$ and $X_4$: $X_2 \rightarrow X_3 \rightarrow X_4$ and $X_2 \leftrightarrow X_4$. Hence $\cov (X_2, X_4) = B_{32} B_{43} + \Omega_{24}$, assuming normalized parameters. Similarly we have $\cov (X_1, X_4) = B_{21} B_{32} B_{43}$.

As a consequence of Wright's formula, we can show that having the same skeleton and collider triples is sufficient for two acyclic path diagrams to be distributionally equivalent:

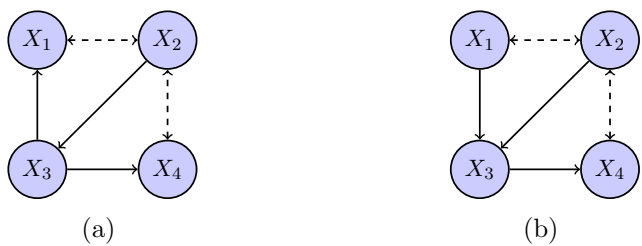
\begin{figure}
  \centering
  \begin{subfigure}{0.45\textwidth}
    \centering
    \resizebox{!}{2.5cm}{
    \begin{tikzpicture}[->,node distance=2.1cm,thick,
                        main node/.style={circle,fill=blue!20,draw}]
      \node[main node] (1) {$X_1$};
      \node[main node] (2) [right of=1] {$X_2$};
      \node[main node] (3) [below of=1] {$X_3$};
      \node[main node] (4) [below of=2] {$X_4$};

      \path[every node/.style={font=\sffamily}]
        (1) edge [dashed,<->] node {} (2)
          (2)  edge node {} (3)
        (3) edge node {} (4)
        (3) edge node {} (1)
        (4) edge [dashed,<->] node {} (2);
    \end{tikzpicture}}
    \caption{}
    \label{fig:example.nonmaximal}
  \end{subfigure}
  \begin{subfigure}{0.45\textwidth}
    \centering
    \resizebox{!}{2.5cm}{
  \begin{tikzpicture}[->,node distance=2.1cm,thick,
                        main node/.style={circle,fill=blue!20,draw}]
      \node[main node] (1) {$X_1$};
      \node[main node] (2) [right of=1] {$X_2$};
      \node[main node] (3) [below of=1] {$X_3$};
      \node[main node] (4) [below of=2] {$X_4$};

      \path[every node/.style={font=\sffamily}]
       (1) edge [dashed,<->] node {} (2)
          (2)  edge node {} (3)
        (3) edge node {} (4)
        (1) edge node {} (3)
        (4) edge [dashed,<->] node {} (2);
         \end{tikzpicture}}
    \caption{}
    \label{fig:example.msep}
  \end{subfigure}
  \caption{The two BAPs in (a) and (b) differ only in the direction of the $X_1$, $X_3$ edge; both have no m-separations, 
  and every induced subgraph leads to models which are distributionally equivalent.  However, by using the results of 
  \citet{ShpI14} one can show that these models are not distributionally equivalent.}
  \label{fig:counterexample2}
\end{figure}

\begin{theorem} \label{thm:sufficient}
  Let $G_1, G_2$ be two acyclic path diagrams that have the same skeleton and collider triples. Then $G_1$ and $G_2$ are distributionally equivalent.
\end{theorem}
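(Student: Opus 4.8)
The plan is to prove the theorem via Wright's path-tracing formula (Theorem~\ref{thm:wright}), by exhibiting $\bar{\mathcal{S}}_{G_1}$ and $\bar{\mathcal{S}}_{G_2}$ as the images of essentially the same parametrization. By symmetry it suffices to show $\bar{\mathcal{S}}_{G_1} \subseteq \bar{\mathcal{S}}_{G_2}$. So I would start from an arbitrary $\Sigma \in \bar{\mathcal{S}}_{G_1}$, write $\Sigma = \phi(\theta^1)$ for some $\theta^1 = (B^1,\Omega^1) \in \bar\Theta_{G_1}$, and construct a parameter $\theta^2 = (B^2,\Omega^2) \in \bar\Theta_{G_2}$ with $\phi(\theta^2) = \Sigma$; the reverse inclusion then follows by exchanging the roles of $G_1$ and $G_2$.

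For the construction, note first that since $G_1$ and $G_2$ share a skeleton, their edges are in natural bijection: each skeleton edge $\{i,k\}$ is realised in $G_1$ by a unique edge carrying a unique weight (an entry of $B^1$ if directed, of $\Omega^1$ if bidirected), and likewise in $G_2$ — here bow-freeness is what guarantees uniqueness, and for arbitrary ADMGs with bows this correspondence would need extra care. I would then \emph{transport} weights: define $B^2 \in \mathcal{B}_{G_2}$ and the off-diagonal part of $\Omega^2$ by copying, along this bijection, each $G_1$-weight into the slot dictated by the orientation of the edge in $G_2$ (so a weight on an edge that is directed in $G_1$ but bidirected in $G_2$ moves from $B^1$ into $\Omega^2$, and so on), setting the remaining off-diagonal entries to $0$; and finally choose the diagonal entries $\Omega^2_{ii}$ so that $\phi(B^2,\Omega^2)_{ii} = 1$ for every $i$. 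This last choice exists and is unique, because $\phi(B^2,\Omega^2)_{ii}$ is affine in $\Omega^2_{ii}$ with slope $((I-B^2)^{-1})_{ii}^2 = 1$, the diagonal of $(I-B^2)^{-1}$ being $1$ for an acyclic $B^2$.

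The heart of the proof is the claim that $G_1$ and $G_2$ have the \emph{same treks}. Concretely, a vertex sequence $\pi = (v_0,\dots,v_l)$ with consecutive vertices adjacent in the common skeleton is a trek in $G_1$ if and only if it is a trek in $G_2$, since whether an internal vertex $v_k$ is a collider on $\pi$ depends only on whether $(v_{k-1},v_k,v_{k+1})$ is a collider triple, and the two graphs share all collider triples by hypothesis. Since Wright's formula expresses $\phi(\theta)_{ij}$ as the sum over treks between $i$ and $j$ of the product of the weights of the edges traversed, the transport above sends the contribution of each trek $\pi$ to $\phi(\theta^1)_{ij}$ to its contribution to $\phi(B^2,\Omega^2)_{ij}$, so $\phi(B^2,\Omega^2)_{ij} = \Sigma_{ij}$ for $i \neq j$, while $\phi(B^2,\Omega^2)_{ii} = 1 = \Sigma_{ii}$ by the choice of diagonal; hence $\phi(\theta^2) = \Sigma$. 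To finish I would check $\theta^2 \in \bar\Theta_{G_2}$: membership $B^2 \in \mathcal{B}_{G_2}$, the off-diagonal zero pattern of $\Omega^2$, its symmetry, and the unit variances all hold by construction, and positive semidefiniteness follows because $\phi(\theta^2) = \Sigma$ forces $\Omega^2 = (I-B^2)\,\Sigma\,(I-B^2)^{T}$, a congruence transform of the positive semidefinite matrix $\Sigma$. Thus $\Sigma = \phi(\theta^2) \in \bar{\mathcal{S}}_{G_2}$, completing one inclusion.

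The step I expect to be the main obstacle is making the ``same treks'' claim fully rigorous against the precise version of Wright's formula in Theorem~\ref{thm:wright}: one must handle degenerate and edge cases (treks passing through a bidirected edge, very short or self-returning treks) and confirm that the shared-collider-triple hypothesis really forces the two trek expansions to agree term by term after the transport, including where the transport moves a weight between $B$ and $\Omega$. A secondary point that would need care is the edge bijection when bows are present, if one wants the statement at the stated generality of acyclic path diagrams rather than only for BAPs.
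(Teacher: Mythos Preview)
Your approach matches the paper's proof essentially line for line: transport the edge weights along the skeleton bijection, solve for the diagonal of $\Omega^2$ to enforce unit variances, invoke Wright's formula (Theorem~\ref{thm:wright}) together with the observation that equal collider triples force equal simple-trek sets $\mathcal{S}_{G_1}^{ij}=\mathcal{S}_{G_2}^{ij}$, and verify positive semidefiniteness of $\Omega^2$ via the congruence $\Omega^2=(I-B^2)\,\Sigma\,(I-B^2)^{T}$. One small refinement: your argument that $\phi_{ii}$ is affine in $\Omega^2_{ii}$ with slope $((I-B^2)^{-1})_{ii}^2=1$ gives only the diagonal of the coefficient matrix of the linear system $\{\phi_{ii}=1\}_i$, not its invertibility, since $\phi_{ii}$ also depends on $\Omega^2_{jj}$ for ancestors $j$ of $i$; the paper closes this by ordering the nodes topologically in $G_2$ and using Theorem~\ref{thm:unstandardised} to see that the system is lower triangular with unit diagonal, hence uniquely solvable---and your caveat about the edge bijection when bows are present is apt, as the paper's case-by-case definition of $(B_2,\Omega_2)$ is likewise written as though each skeleton edge carries a single weight.
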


Considering Figure~\ref{fig:example.msep}, for example, this result shows that if we replace the $X_1 \leftrightarrow X_2$ edge with $X_1 \rightarrow X_2$, the resulting graph is distributionally equivalent to the original.

For DAGs, it is known that the weaker condition of having the same skeleton and v-structures is sufficient for being Markov equivalent. For BAPs this is not true, as the counterexample in Figure~\ref{fig:counterexample} (together with Lemma~\ref{lemma:m-sep}) shows.

We therefore have that the distributional equivalence class of a BAP $G$:
\begin{itemize}
\item
is contained in the set of BAPs with the same skeleton and v-structures as $G$ and
\item
contains the set of BAPs with the same skeleton and collider triples as $G$.
\end{itemize}
We know that the first relation is strict by the counterexample mentioned above and have strong evidence that the second relation is strict as well~\citep[Appendix B]{NowC15}\footnote{These empirical results suggest all 3-node full BAPs to be distributionally equivalent, which would mean there are distributionally equivalent BAPs with different collider triples, implying the strictness of the second inclusion relation above.}.

\section{Greedy Search} \label{sec:greedy}
We aim to find the maximizer of~\eqref{eq:score} over all graphs over the node set $V=\{1, \ldots, d\}$. Since exhaustive search is infeasible, we use greedy hill-climbing. Starting from some graph $G^0$, this method obtains increasingly better estimates by exploring the local neighborhood of the current graph. At the end of each exploration, the highest-scoring graph is selected as the next estimate. This approach is also called greedy search and is often used for combinatorial optimization problems. Greedy search converges to a local optimum, although typically not the global one. To alleviate this we repeat it multiple times with different (random) starting points.

We use the following neighborhood relation. A BAP $G'$ is in the \emph{local neighborhood} of $G$ if it differs by exactly one edge, that is, the number of edges differs by at most one, and one of the following holds:
\begin{enumerate}
\item
$G \subset G'$ (edge addition),
\item
$G' \subset G$ (edge deletion), or
\item
$G$ and $G'$ have the same skeleton (edge change).
\end{enumerate}
If we only admit the first condition, the procedure is called \emph{forward search}, and it is usually started with the empty graph. Instead of searching through the complete local neighborhood at each step (which can become prohibitive for large graphs), we can also select a random subset of neighbors and only consider those.

In Sections \ref{sec:score} and \ref{sec:uniform} we describe some adaptations of this general scheme, that are specific to the problem of BAP learning. In Section~\ref{sec:greedy.equivalence} we describe our greedy equivalence class algorithm.

\subsection{Score Decomposition} \label{sec:score}
Greedy search becomes much more efficient when the score separates over the nodes or parts of the nodes. For DAGs, for example, the log-likelihood can be written as a sum of components, each of which only depends on one node and its parents. Hence, when considering a neighbor of some given DAG, one only needs to update the components affected by the respective edge change. A similar property holds for BAPs. Here, however, the components are not the nodes themselves, but rather the connected components of the bidirected part of the graph (that is, the partition of $V$ into sets of vertices that are reachable from each other by only traversing the bidirected edges). For example, in Figure~\ref{fig:motivatingexample.bap} the bidirected connected components (sometimes also called districts) are $\{X_1\}, \{X_2, X_4\}, \{X_3\}$. This decomposition property is known \citep{ TiaJ05, RicT09}, but for completeness we give a derivation in appendix~\ref{sec:likelihoodseparation}. We write out the special case of the Gaussian parametrization below.

Let us write $p_G^{\mathbf{X}}$ for the joint density of $\mathbf{X}$ under the model~\eqref{eq:sem}, and $p_G^{\bm\epsilon}$ for the corresponding joint density of $\bm\epsilon$. Let $C_1, \ldots, C_K$ be the connected components of the bidirected part of $G$. We separate the model $G$ into submodels $G_1, \ldots, G_K$ of the full SEM~\eqref{eq:sem}, where each $G_k$ consists only of nodes in $V_k = C_k \cup \mathrm{pa}(C_k)$ and without any edges between nodes in $\mathrm{pa}(C_k) \setminus C_k$. Then, as we show in appendix~\ref{sec:likelihoodseparation}, the log-likelihood of the model with joint density $p_G^{\mathbf{X}}$ given data $\mathcal{D} = \{x_i^{(s)}\}$ (with $1 \le i \le d$ and $1 \le s \le n$) can be written as:
\begin{align*}
  l (p_G^{\mathbf{X}}; \mathcal{D}) &= \sum_{s=1}^n \log p_G^{\mathbf{X}} (x_1^{(s)}, \ldots, x_p^{(s)}) \notag\\
  &= \sum_k \left( l (p_{G_k}^{\mathbf{X}}; \{x_i^{(s)}\}^{s=1,\ldots,n}_{i \in V_k}) - \sum_{j \in \mathrm{pa} (C_k) \setminus C_k} l (p_{G_k}^{\mathbf{X}}; \{x_j^{(s)}\}^{s=1,\ldots,n}) \right),
\end{align*}
where $l (p_{G_k}^{\mathbf{X}}; \{x_j^{(s)}\}^{s=1,\ldots,n})$ refers to the likelihood of the $X_j$-marginal of $p_{G_k}^{\mathbf{X}}$. For our Gaussian parametrization, using~\eqref{eq:likelihood}, this becomes
\begin{align*}
  l ( \Sigma_{G_1}, \ldots, \Sigma_{G_K} ; S) & = \\
  -\frac{n}{2} \sum_k & \Bigg( |C_k| \log 2 \pi + \log \frac{|\Sigma_{G_k}|}{\prod_{j \in \mathrm{pa}(C_k) \setminus C_k} \sigma_{kj}^2} + \\
  & \qquad \qquad \frac{n-1}{n} \mathrm{tr} \big(\Sigma_{G_k}^{-1} S_{G_k} - |\mathrm{pa}(C_k) \setminus C_k| \big) \Bigg),
\end{align*}
where $S_{G_k}$ is the restriction of $S$ to the rows and columns corresponding to $C_k$, and $\sigma_{kj}^2$ is the diagonal entry of $\Sigma_{G_k}$ corresponding to parent node $j$. Note that now the log-likelihood depends on $\{x_i^{(s)}\}$ and $p_G^{\mathbf{X}}$ only via $S$ and $\Sigma_{G_1}, \ldots, \Sigma_{G_K}$. Furthermore, the log-likelihood is now a sum of contributions from the submodels $G_k$. This means we only need to re-compute the likelihood of the submodels that are affected by an edge change when scoring the local neighborhood. In practice, we also cache the submodel scores, that is, we assign each encountered submodel a unique hash and store the respective scores, so they can be re-used.

\subsection{Uniformly Random Restarts} \label{sec:uniform}
To restart the greedy search we need random starting points (BAPs), and it seems desirable to sample them \emph{uniformly} at random\footnote{Another motivation for uniform BAP generation is generating ground truths for simulations.}. Just like for DAGs, it is not straightforward to achieve this. What is often done in practice is uniform sampling of triangular (adjacency) matrices and subsequent uniform permutation of the nodes. However, this does not result in uniformly distributed graphs, since for some triangular matrices many permutations yield the same graph (the empty graph is an extreme example). The consequence is a shift of weight to more symmetric graphs, that are invariant under some permutations of their adjacency matrices. A simple example with BAPs for $d=3$ is shown in Figure~\ref{fig:uniform}. One way around this is to design a random process with graphs as states and a uniform limiting distribution. A corresponding Markov Chain Monte Carlo (MCMC) approach is described for example in \citet{ MelGA01} for the case of DAGs. See also \citet{ KuiJM15} for an overview of different sampling schemes.

\begin{figure}
  \centering
  \includegraphics[width=0.5\textwidth]{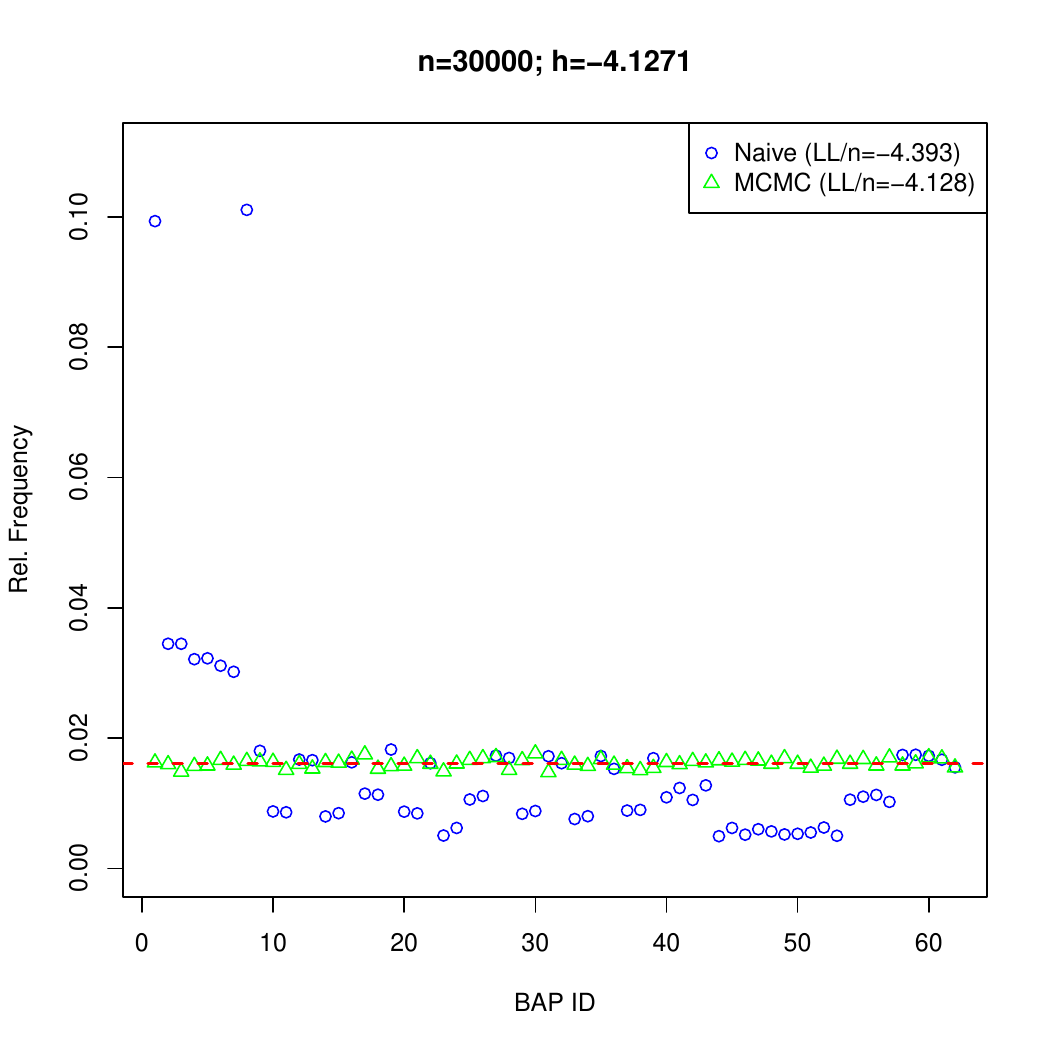}
  \caption{Relative frequencies of the 62 3-node BAPs when sampled 30000 times with the ``naive'' (triangular matrix sampling) and the MCMC method.}
  \label{fig:uniform}
\end{figure}

We adapted the MCMC algorithm for BAPs as described below.

\begin{algorithm}
  \label{alg:mcmc}
  Let $G_k = (V, E_D, E_B)$ be the BAP of the current MCMC iteration. Let $(i,j) \in V \times V \setminus \{ (i,i) ~|~ i \in V \}$ be a position sampled uniformly at random and let $\sigma \in Bernoulli(0.5)$ be a single Bernoulli draw. We then form $G_{k+1} = (V, E_D', E_B')$ by applying the following rules.
  \begin{enumerate}
    \item
    If there is an edge at $(i,j)$ (i.e.\ if $(i,j) \in E_D$ or $(j,i) \in E_D$ or $(i,j) \in E_B$), and
      \begin{enumerate}
        \item
        \label{edge:remove}
        if $\sigma = 0$: remove the edge (i.e.\ $E_D' = E_D \setminus \{ (i,j), (j,i) \}$, $E_B' = E_B \setminus \{ (i,j), (j,i) \}$).
        \item
        \label{edge:nothing}
        if $\sigma = 1$: do nothing.
      \end{enumerate}
    \item
    If there is no edge at $(i,j)$ (i.e.\ if $(i,j) \notin E_D$ and $(j,i) \notin E_D$ and $(i,j) \notin E_B$), and
    \begin{enumerate}
      \item
      \label{noedge:directed}
      if $\sigma = 0$: add $i \rightarrow j$ (i.e.\ $E_D' = E_D \cup \{ (i,j) \}$, $E_B' = E_B$) as long as this does not create a directed cycle, otherwise do nothing;
      \item
      \label{noedge:bidirected}
      if $\sigma = 1$: add $i \leftrightarrow j$ (i.e.\ $E_D' = E_D$, $E_B' = E_B \cup \{ (i,j), (j,i) \}$).
    \end{enumerate}
  \end{enumerate}
\end{algorithm}
It is easy to check that the resulting transition matrix is irreducible and symmetric (see Appendix~\ref{sec:mcmc}) and hence the Markov chain has a (unique) uniform stationary distribution. Thus, starting from any graph, after an initial burn-in period, the distribution of the visited states will be approximately uniform over the set of all BAPs. In practice, we start the process from the empty graph and sample after taking $O(d^4)$ steps (c.f.\ \citet{ KuiJM15}).

It is straightforward to adapt this sampling scheme to a number of constraints, for example uniform sampling over all BAPs with a given maximal in-degree.

\subsection{Greedy Equivalence Class Construction} \label{sec:greedy.equivalence}
We propose the following recursive algorithm to greedily estimate the distributional equivalence class $EC(G)$ of a given BAP $G$ with score $\zeta$. We start by populating the \emph{empirical equivalence class} $\widehat{EC}(G)$ with graphs that have the same skeleton and collider triples as $G$, since these are guaranteed to be equivalent by Theorem~\ref{thm:sufficient}. This is a significant computational shortcut, since these graphs do not have to be found greedily anymore. Then, starting once from each of the graphs in $\widehat{EC}(G)$ found above, at each recursion level we search all edge-change neighbors of the current BAP for BAPs that have a score within $\epsilon$ of $\zeta$ (edge additions or deletions would result in non-equivalent graphs by Corollary~\ref{cor:necessary}). For each such BAP, we start a new recursive search until a maximum depth of $d(d-1)/2$ (corresponding to the maximum number of possible edges) is reached, and always comparing against the original score $\zeta$. Already visited states are stored and ignored. Finally, all found graphs are added to $\widehat{EC}(G)$. The main tuning parameter here is $\epsilon$, essentially specifying the threshold for numerical error, as well as statistically indistinguishable graphs. This results in conservative estimates for total causal effects using the methods discussed in Section~\ref{sec:simulations} by also including neighboring equivalence classes, that are statistically indistiguishable from the given data.

\subsection{Implementation}
Our implementation is done in the statistical computing language R \citep{RCo15}, and the code is available as an R package on github \citep{NowC17}. We make heavy use of the RICF implementation \texttt{fitAncestralGraph}\footnote{Despite the function name the implementation is not restricted to ancestral graphs.} in the \texttt{ggm} package \citep{GioMA15}. We noted that there are sometimes convergence issues, so we adapted the implementation of RICF to include a maximal iteration limit (which we set to 10 by default).

\section{Empirical Results} \label{sec:empirical}
In this section we present some empirical results to show the effectiveness of our method. First, we consider a simulation setting where we can compare against the ground truth. Then we turn to a well known genomic data set, where the ground truth is unknown, but the likelihood of the fitted models can be compared against other methods.

\subsection{Causal Effects Discovery on Simulated Data} \label{sec:simulations}
To validate our method, we randomly generate ground truths, simulate data from them, and try to recover the true total causal effects from the generated datasets. This procedure is repeated $N=100$ times and the results are averaged. We now discuss each step in more detail.
\\ \\
{\bf Randomly generate a BAP $G$.}\\
We do this \emph{uniformly} at random (for a fixed model size $d=10$ and maximal in-degree $\alpha=2$). The sampling procedure is described in Section~\ref{sec:uniform}.
\\ \\
{\bf Randomly generate parameters $\theta_G$.}\\
We sample the directed edge labels in $B$ independently from a standard Normal distribution. We do the same for the bidirected edge labels in $\Omega$, and set the error variances (diagonal entries of $\Omega$) to the respective row-sums of absolute values plus an independently sampled $\chi^2(1)$ value\footnote{By Gershgorin's circle theorem, this is guaranteed to result in a positive definite matrix. To increase stability, we also repeat the sampling of $\Omega$ if its minimal eigenvalue is less then $10^{-6}$.}.
\\ \\
{\bf Simulate data $\{ x_i^{(s)} \}$ from $\theta_G$.}\\
This is straightforward, since we just need to sample from a multivariate Normal distribution with mean $\mathbf{0}$ and covariance $\phi(\theta_G)$. We use the function \texttt{rmvnorm()} from the package \texttt{mvtnorm} \citep{GenAA14}.
\\ \\
{\bf Find an estimate $\hat{G}$ from $\{ x_i^{(s)} \}$.}\\
We use greedy search with $R=100$ uniformly random restarts (as outlined in Section~\ref{sec:greedy}), as well as one greedy forward search starting from the empty model.
\\ \\
{\bf Compare $G$ and $\hat{G}$.}\\
A direct comparison of the graphs does not make sense since they could be different but in the same equivalence class. We therefore estimate the equivalence classes of both $G$ and $\hat{G}$ using the greedy approach described in Section~\ref{sec:greedy.equivalence} with $\epsilon=10^{-10}$ to get $\widehat{EC}(G)$ and $\widehat{EC}(\hat{G})$.

Since the estimated equivalence classes are empirical, it is not straightforward to compare them. For one, they might be intersecting, but not equal (if the recursion level was set too low and they were started from different graphs for example). More relevantly, they might be entirely different, but still agree in large areas of the graph. We therefore chose to evaluate not the graph structure but the identifiability of causal effects. Often this is also more relevant in practice. \citet{ MaaMA09} developed a method (which they called IDA) to find identifiable causal effects in a multiset of DAGs. We apply the same idea in our setting. Specifically, this means we estimate the causal effects matrix $\hat{E}$ for each graph $G' \in \widehat{EC}(G)$ (using the estimated parameters $\hat{\theta}_{G'}=(\hat{B'}, \hat{\Omega'})$ and~\eqref{eq:causaleffect}). We then take absolute values and take the entry-wise minima over all $\hat{E}$ to obtain $\hat{E}^{min}_G$, the minimal absolute causal effects matrix (if an entry $E_{ij}$ is nonzero, there is a nonzero causal effect from $X_i$ to $X_j$ for every graph in the equivalence class). We do the same for $\hat{G}$ to get $\hat{E}^{min}_{\hat{G}}$.

What is left is to compare the minimal absolute causal effects matrix $\hat{E}^{min}_G$ of the ground truth to the minimal absolute causal effects matrix $\hat{E}^{min}_{\hat{G}}$ of the estimate. Thus, our target set consists of all pairs $(i,j)$, such that $(\hat{E}_{G}^{min})_{ij} > 0$. We score the pairs according to our estimated $\hat{E}_{\hat{G}}^{min}$ values, and we report the area under the ROC curve (AUC, see \citet{ HanJA83}). The AUC ranges from 0 to 1, with 1 meaning perfect classification and 0.5 being equivalent to random guessing\footnote{Some care has to be taken because of the fact that the cases $(\hat{E}^{min}_G)_{ij} > 0$ and $(\hat{E}^{min}_G)_{ji} > 0$ exclude each other, but we took this into account when computing the false positive rate.}. In our case, we have a separate ROC curve for each graph. The points on the curve correspond to the thresholding on the estimated absolute value of the causal effects; the $k$-th point shows the situation when we classify the largest $k-1$ values as causal, and the rest as non-causal.
\\ \\
The results for 100 simulations can be seen in Figure~\ref{fig:simulation}; the average AUC is $0.75$. While this suggests that perfect graph discovery is usually not achieved, causal effects can be identified to some extent. We also note that our simulation setting is challenging, in the sense that non-zero edge weights can be arbitrarily close to zero. The computations took $2.5$ hours on an AMD Opteron 6174 processor using $20$~cores.

\begin{figure}
  \centering
  \includegraphics[width=0.5\textwidth]{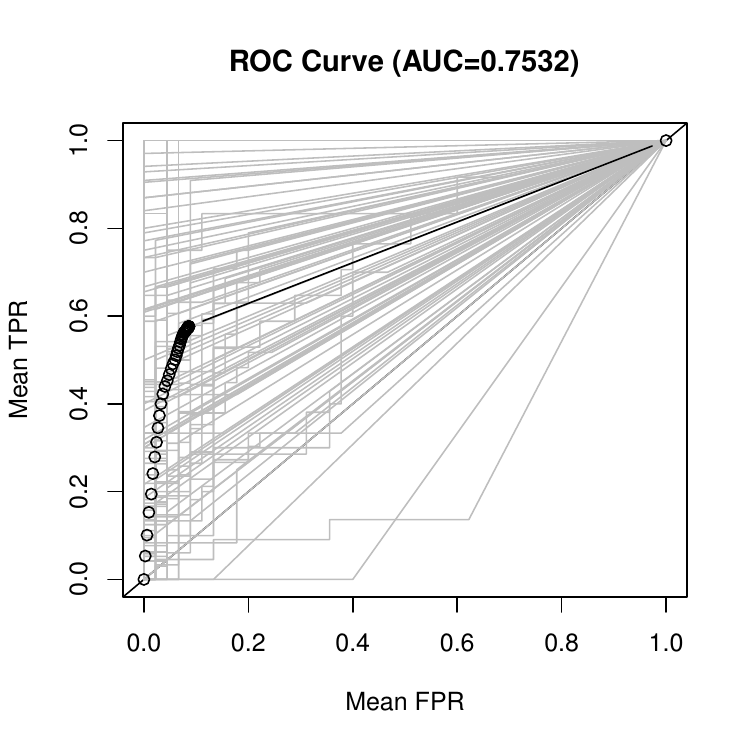}
  \caption{ROC curves for causal effect discovery for $N=100$ simulation runs of BAPs with $d=10$ nodes and a maximal in-degree of $\alpha=2$. Sample size was $n=1000$, greedy search was repeated $R=100$ times at uniformly random starting points. The average area under the ROC curves (AUC) is~$0.75$. The thick curve is the point-wise average of the individual ROC curves.}
  \label{fig:simulation}
\end{figure}

\subsection{Genomic Data}
We also applied our method to a well-known genomics data set \citep{ SacKA05}, where the expression of 11 proteins in human T-cells was measured under 14 different experimental conditions (the sample size varies between 707 and 927). There are likely hidden confounders, which makes this setting suitable for hidden variable models. However, it is questionable whether the bow-freeness, linearity, and Gaussianity assumptions hold to a reasonable approximation (in fact the data seem not to be multivariate normal). Furthermore, there does not exist a ground truth network (although some individual links between pairs of proteins are reported as known in the original paper). So we abstain from comparing a ``best'' network with reported links in literature, but instead use this as an example for comparing highest-scoring BAPs and DAGs.

To do this, we first log-transform all variables since they are heavily skewed. We then run two sets of greedy searches for each of the 14 data sets: one with BAPs and one with DAGs. We use 100 random restarts in both cases. The results can be seen in Figures~\ref{fig:real} and~\ref{fig:real2}. The computations took $4$ hours for the BAP models and $1.5$ hours for the DAG models on an AMD Opteron 6174 processor using $20$ cores.

Note that while the BAPs and DAGs look very similar in many cases, the BAPs are more conservative in identifying causal effects. Eg for dataset~4 there is a v-structure at \texttt{pip3} (with \texttt{pip2} and \text{plcg}) in both the highest-scoring BAP and DAG. However, by Theorem~\ref{thm:sufficient}, this part of the BAP is equivalent to versions with different edge directions (as long as the collider is preserved). This is not the case for the DAG. Hence, in the DAG model these edges are identifiable, but this identifiability disappears in the presence of potential hidden confounders in BAPs. This exemplifies the more conservative nature of BAP models. Another example is the v-structure at \texttt{pakts473} (with \texttt{pka} and \texttt{pkc}) in dataset~8.

\begin{figure}
  \centering
  \includegraphics[width=\textwidth]{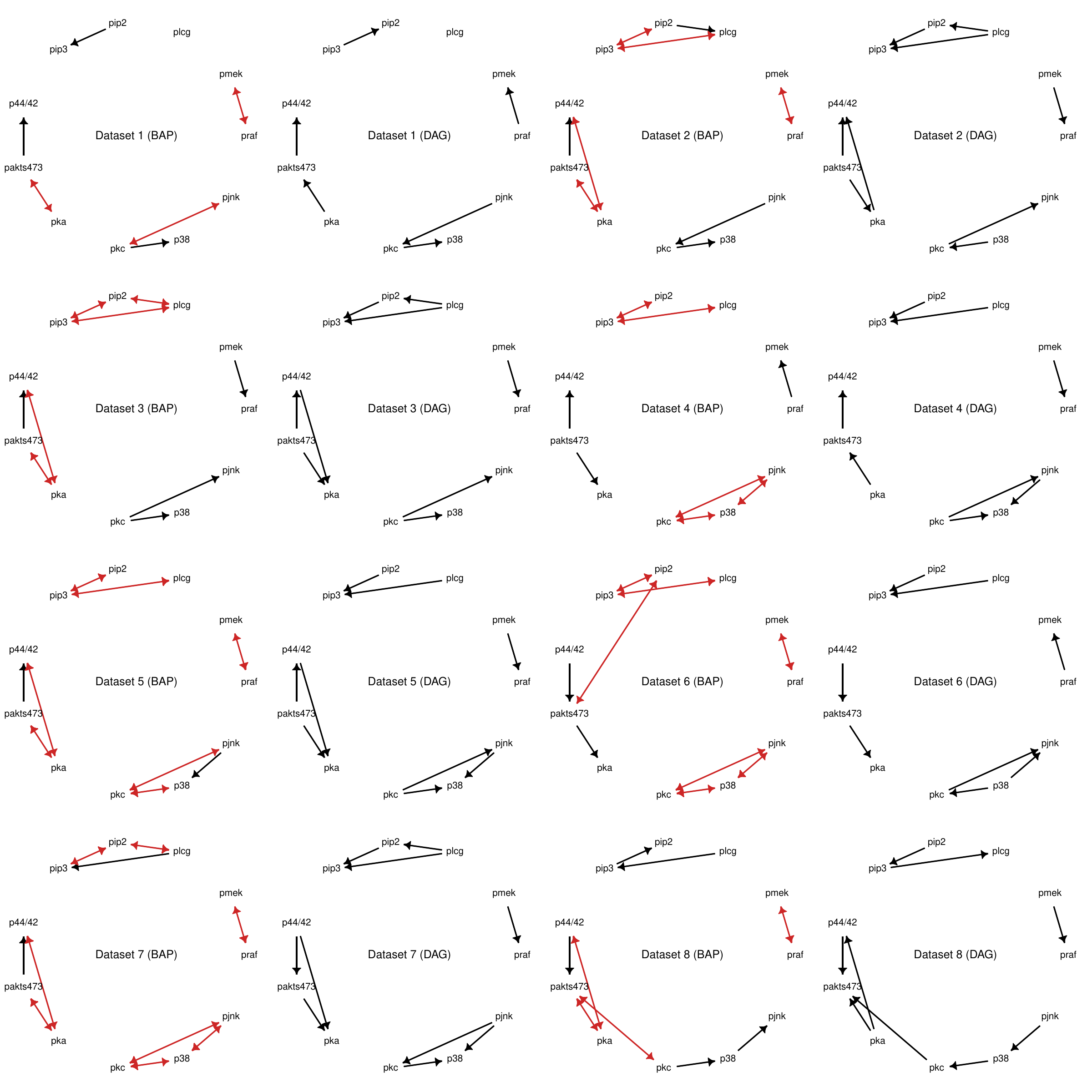}
  \caption{Highest-scoring BAPs and DAGs found by greedy search for 8 of the 14 genomic datasets in~\citet{SacKA05} (continued in Figure~\ref{fig:real2}). For simplicity only one highest-scoring graph is shown per example while equivalent and equally high-scoring graphs are omitted. Note that the equivalence classes in the corresponding BAPs and DAGs are similar but some v-structures lead to identifiablity in DAGs but not in BAPs.}
  \label{fig:real}
\end{figure}

\begin{figure}
  \centering
  \includegraphics[width=\textwidth]{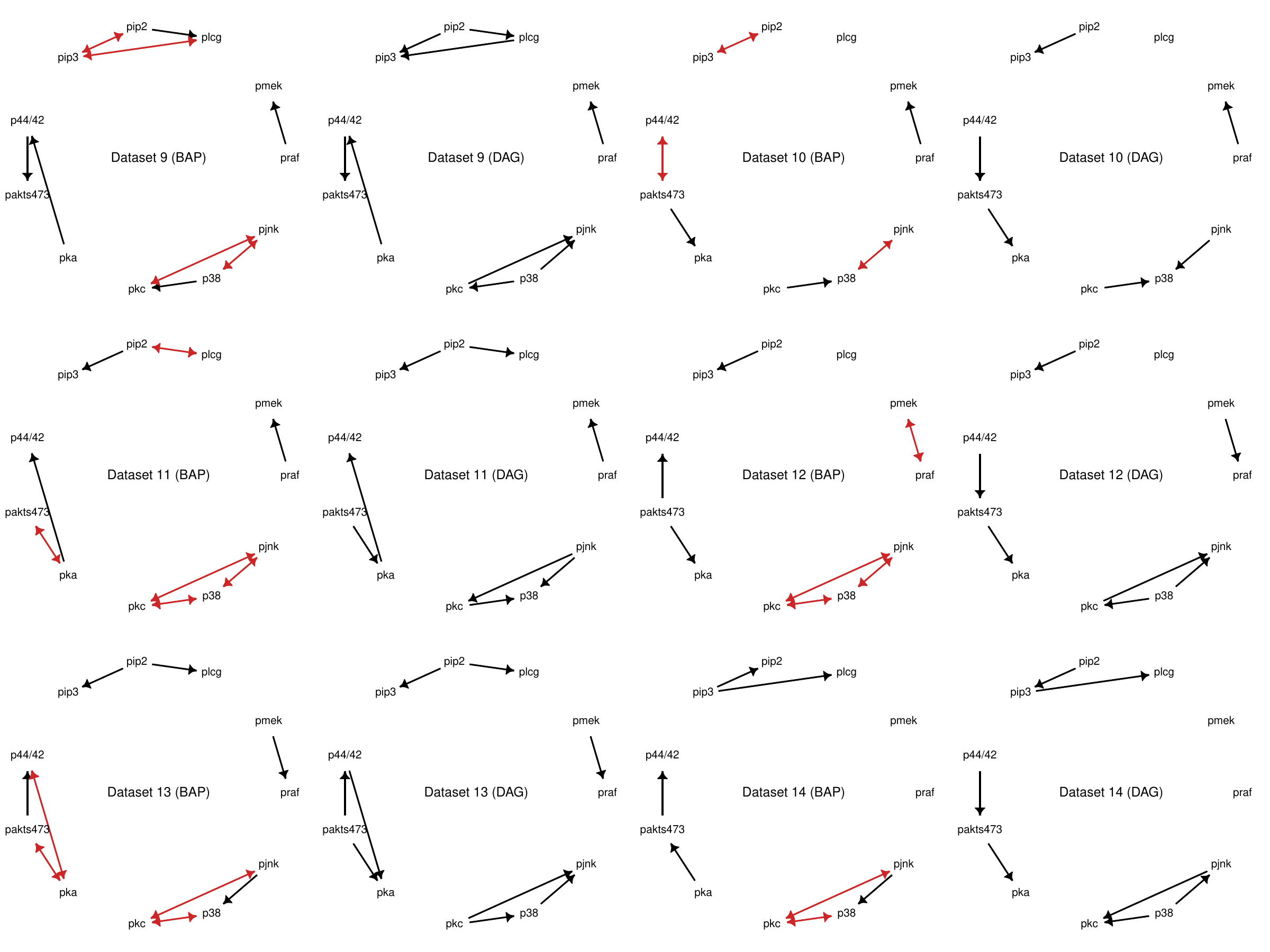}
  \caption{Highest-scoring BAPs and DAGs found by greedy search for the remaining 6 genomic datasets in~\citet{SacKA05} (continuation of Figure~\ref{fig:real}).\\
  Dataset Names:\\
  1: \texttt{cd3cd28}\\2: \texttt{cd3cd28icam2+aktinhib}\\3: \texttt{cd3cd28icam2+g0076}\\4: \texttt{cd3cd28icam2+psit}\\5: \texttt{cd3cd28icam2+u0126}\\6: \texttt{cd3cd28icam2+ly}\\7: \texttt{cd3cd28icam2}\\8: \texttt{cd3cd28+aktinhib}\\9: \texttt{cd3cd28+g0076}\\10: \texttt{cd3cd28+psitect}\\11: \texttt{cd3cd28+u0126}\\12: \texttt{cd3cd28+ly}\\13: \texttt{pma}\\14: \texttt{b2camp}
  }
  \label{fig:real2}
\end{figure}

\section{Conclusions}
We have presented a structure learning method for BAPs, which can be viewed as a generalization of Gaussian linear DAG models that allow for certain latent variables. Our method is computationally feasible and the first of its kind. The results on simulated data are promising, keeping in mind that structure learning and inferring causal effects are difficult, even for the easier case with DAGs. The main sources of errors (given the model assumptions are fulfilled) are sampling variability, finding a local optimum only, and not knowing the equivalence classes. Local optima are a general weakness of many structure learning methods in Bayesian networks since this problem is NP-hard in general \citep{ ChiD96}. In our simulations, overestimating the equivalence class leads to too few causal effects, while the opposite happens if we underestimate it. On the other hand, our approach of greedily approximating the empirical equivalence class is building on the idea that some models are statistically indistinguishable, due to limited sample size and estimation error. Therefore, our approach has the advantage that it can include neighboring equivalence classes, that score almost as well, which is desirable from a statistical point of view. Our theoretical results about model equivalence go some way towards characterizing the distributional equivalence classes in BAP models and allow us to efficiently approximate them empirically.

In many applications, not all relevant variables are observed, calling for hidden variable models. While there have been structure learning methods for general hidden variable models for many years (FCI, RFCI, FCI+, see \citet{ SpiPA93, ColDA12, ClaTA13}), causal inference based on these models is very conservative \citep{MalDS17}. BAP models are restricted hidden variable models, where the restriction comes from the bow-freeness constaint. As such, they form an interesting middle ground between general hidden variable models and models that do not allow any hidden variables. In particular, the bow-freeness constraint leads to improved identifiability of causal effects when compared to general hidden variable models, while being more conservative than models without hidden variables. This makes our structure learning algorithm for BAPs a useful addition to existing structure learning methods. Structure learning for a different type of restricted hidden variable models is considered in \citet{FroBA17}, and it will be interesting to compare our results with this method.

\appendix
\section{Appendix}
\subsection{Distributional Equivalence}
\subsubsection{Necessary Conditions}
The following Lemma shows that the point $(B,\Omega) = (0,I)$ is non-singular for the map $\phi: \Theta_G \rightarrow \mathcal{S}_G$ for any BAP $G$.
The result also appears in \citet{BriCP02} and \citet{DrtMA11}.

\begin{lemma} \label{lem:ident}
Let $G$ be a BAP with parameters $(B, \Omega)$, and let $\phi$ be as in (\ref{eq:covmat}).  Then $\phi^{-1}(I) = \{(0, I)\}$; that is,
the parameters are uniquely identifiable at $\Sigma = I$ (or indeed at any diagonal $\Sigma$).
\end{lemma}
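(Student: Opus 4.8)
The plan is to use acyclicity and bow\nobreakdash-freeness in tandem with the diagonality of $\Sigma$, reducing everything to an induction along a topological order of $G$. Since $G$ is acyclic I would first fix a topological ordering of $V$, so that $j \to i \in G$ implies $j$ precedes $i$, and relabel the vertices as $1 < 2 < \cdots < d$. Then every $B \in \mathcal{B}_G$ is strictly lower triangular, so $L := I - B$ is lower triangular with unit diagonal (and invertible, as already noted after \eqref{eq:covmat}). Suppose $(B,\Omega) \in \Theta_G$ satisfies $\phi(B,\Omega) = \Sigma$ for a diagonal positive definite $\Sigma$ (in particular $\Sigma = I$). By \eqref{eq:covmat} this is equivalent to $\Omega = L\Sigma L^{T}$, so that for all $i,j$
\[
  \Omega_{ij} = \sum_{k} L_{ik}L_{jk}\,\Sigma_{kk} = \sum_{k \le \min(i,j)} L_{ik}L_{jk}\,\Sigma_{kk},
\]
using that $L$ is lower triangular.

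Next I would show by induction on $i = 1, \ldots, d$ that the $i$-th row of $B$ vanishes, i.e.\ that the $i$-th row of $L$ is the $i$-th standard basis vector $e_i^{T}$. Assume this for all rows with index $< i$. If $j \notin \mathrm{pa}_G(i)$ then $B_{ij} = 0$ by definition of $\mathcal{B}_G$, so it suffices to treat $j \in \mathrm{pa}_G(i)$; for such $j$ the topological ordering gives $j < i$, so the induction hypothesis yields $L_{jk} = \delta_{jk}$ for all $k$, and the display above collapses to $\Omega_{ij} = L_{ij}\,\Sigma_{jj}$. On the other hand, $j \to i \in G$ forces $i \leftrightarrow j \notin G$ by bow\nobreakdash-freeness, hence $\Omega_{ij} = 0$ by definition of $\mathcal{O}_G$; since $\Sigma_{jj} > 0$ this gives $L_{ij} = 0$, i.e.\ $B_{ij} = 0$. (The base case $i=1$ is just the instance $\mathrm{pa}_G(1) = \emptyset$.) Therefore $B = 0$, hence $L = I$, and then $\Omega = L\Sigma L^{T} = \Sigma$. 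So $(B,\Omega) = (0,\Sigma)$ is the unique element of $\Theta_G$ mapping to $\Sigma$; in particular $(0,I)$ is the only preimage of $I$, which is the claimed identifiability.

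For the non\nobreakdash-singularity statement preceding the lemma I would linearize the same argument at $(B,\Omega) = (0,I)$: expanding \eqref{eq:covmat} gives $d\phi(\delta B,\delta\Omega) = \delta B + \delta B^{T} + \delta\Omega$ on the tangent space $\mathcal{B}_G \times T_I\mathcal{O}_G$, and setting this to $0$ and reading off the $(i,j)$\nobreakdash-entry for $i \ne j$, acyclicity kills one of $\delta B_{ij},\delta B_{ji}$ while bow\nobreakdash-freeness kills $\delta\Omega_{ij}$ whenever the other is permitted to be nonzero, so $\delta B = 0$ and then $\delta\Omega = 0$. Hence $d\phi$ has trivial kernel at $(0,I)$.

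The step I expect to be the crux is recognizing how to marry the three structural ingredients: acyclicity supplies both the triangular form of $L$ and the induction order; bow\nobreakdash-freeness is exactly what guarantees $\Omega_{ij} = 0$ at precisely those positions $j \in \mathrm{pa}_G(i)$ where $B_{ij}$ is allowed to be nonzero; and positive definiteness of the diagonal $\Sigma$ is what lets one cancel $\Sigma_{jj}$. Once this is seen the rest is bookkeeping. The only hypothesis that needs care is that the diagonal $\Sigma$ be positive definite (automatic for $\Sigma = I$): if $\Sigma$ were merely diagonal positive semidefinite the cancellation step could fail.
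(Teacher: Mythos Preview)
Your proof is correct and follows essentially the same approach as the paper: both arguments induct along the causal order, using acyclicity to make $I-B$ lower triangular and bow-freeness to force the relevant off-diagonal entries of $\Omega$ to vanish, thereby pinning down each row of $B$ to zero. The only cosmetic difference is that the paper phrases the induction on the number of vertices (peeling off a sink and block-decomposing $\Sigma$), whereas you induct directly on the rows of $L = I-B$ via the identity $\Omega = L\Sigma L^{T}$; your formulation is slightly more streamlined but the content is identical.
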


\begin{proof}
We proceed by induction on $d$, the number of vertices in $G$. If $d=1$ then the result is trivial since $B=0$.

Otherwise assume without loss of generality that the last vertex $d$ has no children. The result holds for the subgraph of the remaining vertices by the induction hypothesis and the fact that the distribution of $\bf X$ is defined recursively. We know that $\Sigma$ is of the form
\begin{align*}
  \Sigma &= (I_d-B)^{-1} \Omega (I_d-B)^{-T},
\end{align*}
and we may deduce by the induction hypothesis that the first $d-1$ rows of $B$ are zero, and the upper $(d-1) \times (d-1)$-sub matrix of $\Omega$ is the identity matrix. Hence
\begin{align*}
  \Omega &= \left( \begin{array}{cc} 
  I_{d-1} & \bs\omega\\
  \bs\omega^T & \omega_{dd}\\
  \end{array}\right),
\end{align*}
where $\bs\omega^T = (\omega_{1d}, \ldots, \omega_{d-1,d})$, and
\begin{align*}
  (I_d-B)^{-1} = 
  \left( \begin{array}{cc} 
  I_{d-1} & 0\\
  -\bs\beta^T & 1\\
  \end{array}\right)^{-1} = 
  \left( \begin{array}{cc} 
  I_{d-1} & 0\\
  \bs\beta^T & 1\\
  \end{array}\right),
\end{align*}
where $\bs\beta^T = (\beta_{d1}, \ldots, \beta_{d,d-1})$. Hence 
\begin{align*}
  \Sigma &= (I-B)^{-1} \Omega (I-B)^{-T}\\
  &= \left( \begin{array}{cc} 
  I_{d-1} & 0\\
  \bs\beta^T & 1\\
  \end{array}\right)
  \left( \begin{array}{cc} 
  I_{d-1} & \bs\omega\\
  \bs\omega^T & \omega_{dd}\\
  \end{array}\right)
  \left( \begin{array}{cc} 
  I_{d-1} & \bs\beta\\
  0 & 1\\
  \end{array}\right)
  \\
  &= \left( \begin{array}{cc} 
  I_{d-1} & 0\\
  \bs\beta^T & 1\\
  \end{array}\right)
  \left( \begin{array}{cc} 
  I_{d-1} & \bs\beta+\bs\omega\\
  \bs\omega^T & \bs\beta^T\bs\omega + \omega_{dd}\\
  \end{array}\right)
  \\
  \intertext{but note that $\bs\beta^T\bs\omega = 0$ by the bow-free assumption, so we get}
  \Sigma &= 
   \left( \begin{array}{cc} 
  I_{d-1} & \bs\beta + \bs\omega\\
  \bs\beta^T + \bs\omega^T & \| \bs\beta \|^2 + \omega_{dd}\\
  \end{array}\right),
\end{align*}
and hence $\bs\beta + \bs\omega = \bs 0$. Now note that for each $j$, either $\beta_{dj} = 0$ or $\omega_{jd} = 0$ by the bow-free assumption; hence $\bs\beta + \bs\omega = \bs 0$ implies that $\bs\beta = \bs\omega = \bs 0$, leaving
\begin{align*}
  \Sigma =  \left( \begin{array}{cc} 
  I_{d-1} & \bs 0\\
  \bs 0^T & \omega_{dd}\\
  \end{array}\right)
\end{align*}
and hence $\omega_{dd} = 1$. This completes the result.
\end{proof}

\begin{corollary} \label{cor:neig}
Let $G$ be a BAP.  For some neighborhood $U$ of the set of covariance matrices containing $I$, if $\Sigma \in \mathcal{S}_G \cap U$ with $\sigma_{ij} = 0$ (for $i \neq j$), then this implies that $\omega_{ij} = \beta_{ij} = 0$.
\end{corollary}

\begin{proof}
Since $\phi$ is nonsingular and differentiable at $\theta_0 = (O, I)$, its partial derivatives are defined and given by $\frac{\partial \phi}{\partial \omega_{ij}} (\theta_0) = 1$ and $\frac{\partial \phi}{\partial \beta{ij}} (\theta_0) = 1$ (this can be shown via a Taylor expansion for example). Therefore, in a small neighborhood around $\phi(\theta_0)$ we have $\sigma_{ij} = 0$ only if $\omega_{ij} = \omega_{ji} = \beta_{ij} = 0$.
\end{proof}

Note that Lemma~\ref{lem:ident} allows a direct proof of the fact that having the same skeleton is necessary for BAPs to be distributionally equivalent by looking at the tangent spaces of the models at $\Sigma = I$ and showing that they are determined by the skeletons of the graphs.

In the proof of Theorem~\ref{thm:subgraph} we make use of the language of polynomial varieties (see \citet{CoxDA07} for an overview). A variety is a set defined by the zeros of some collection of polynomials (in our case polynomials in the entries of $\Sigma$), and all SEM models are varieties.

Let $G$ be a BAP with vertices $V = W \dot\cup \overline{W}$ where $W \cap \overline{W} = \emptyset$. Let $\mathcal{B}_G^W$ be the set of matrices $B \in \mathcal{B}_G$ such that only entries corresponding to directed edges in $G$ between vertices in $W$ have non-zero coefficients. Similarly, let $\mathcal{O}_G^W$ be the set of $\Omega \in \mathcal{O}_G$ such that entries corresponding to edges outside $W$ are zero and diagonal entries outside $W$ are 1. 

Define a model $\tilde{\mathcal{S}}_G^W$ as the image of the map $\phi$ applied to $(\mathcal{B}_G^W, \mathcal{O}_G^W)$. So in other words, we only manipulate parameters in $G$ that correspond to vertices and edges in $G_W$.  The resulting model is canonically isomorphic to $\mathcal{S}_{G_W}$ via a simple projection, since this is the same setup as for the BAP $G_W$, but with the matrices extended to include independent vertices in $\overline{W} \equiv V \setminus W$.  

Let $\mathcal{T}_W$ be the set of covariance matrices $\Sigma$ on $V$ such that $\Sigma_{\overline{W}\overline{W}} = I$ and $\Sigma_{W \overline{W}} = 0$ (i.e.\ so that vertices outside $W$ are completely independent).

We will show that looking at the set of covariance matrices in $\mathcal{S}_G$ that are also in $\mathcal{T}_W$ is essentially the same as the set $\tilde{\mathcal{S}}_G^W$. Since the first set is a property of the full model, and the second set is determined by the subgraph $G_W$, this will be enough to prove Theorem~\ref{thm:subgraph}.

\begin{proof}[Proof of Theorem~\ref{thm:subgraph}]
What we need to prove is that $\mathcal{S}_G = \mathcal{S}_G'$ implies $\mathcal{S}_{G_W} = \mathcal{S}_{G'_W}$. Consider again the variety $\mathcal{T}_W$ defined above.

Clearly $\mathcal{S}_G = \mathcal{S}_{G'}$ implies $\mathcal{T}_W \cap \mathcal{S}_G = \mathcal{T}_W \cap \mathcal{S}_{G'}$. We will show that the irreducible component of $\mathcal{T}_W \cap \mathcal{S}_G$ which contains $\Sigma = I$ is the same as $\tilde{\mathcal{S}}_G^W$; since this last quantity is isomorphic to $\mathcal{S}_{G_W}$ we will prove the result.

First, note that $\tilde{\mathcal{S}}_G^W \subseteq \mathcal{T}_W$ and $\tilde{\mathcal{S}}_G^W \subseteq \mathcal{S}_{G}$, so clearly $\tilde{\mathcal{S}}_G^W \subseteq \mathcal{S}_{G} \cap \mathcal{T}_W$. In addition, note that by Corollary \ref{cor:neig}, in a neighborhood of $\Sigma = I$ every element of $\mathcal{T}_W \cap \mathcal{S}_G$ is also contained in $\tilde{\mathcal{S}}^W_G$. It follows that the entire irreducible component of $\mathcal{T}_W \cap \mathcal{S}_G$ containing $\Sigma = I$ is contained within $\tilde{\mathcal{S}}^W_G$, and therefore that the irreducible component of $\mathcal{T}_W \cap \mathcal{S}_G$ containing $\Sigma = I$ is $\tilde{\mathcal{S}}^W_G$.
\end{proof}

We can now prove the Theorem giving necessary conditions for BAP equivalence.

\begin{proof}[Proof of Corollary~\ref{cor:necessary}]
Let us first consider vertex pairs, i.e.\ $W = \{ i,j \}$. By Theorem~\ref{thm:subgraph} we have $G_W$ being distributionally equivalent to $G'_W$. If $G_W \ne G'_W$ we would have $i \ci_m j$ in one of the graphs but not the other, and using Lemma~\ref{lemma:m-sep} this would lead to a contradiction. Hence $G_W = G'_W$ for any vertex pair, and hence $G$ and $G'$ must have the same skeleton.

Let us now consider vertex triplets $W = \{ i,j,k \}$, such that (without loss of generality) there is a v-structure at $j$ in $G_W$. Then $i \ci_m k$ in $G_W$ and by the same argument as above we must have $i \ci_m k$ also in $G'_W$. This is only possible if there is a v-structure at $j$ in $G'_W$. Hence $G$ and $G'$ must have the same v-structures.
\end{proof}

\subsubsection{Sufficient Conditions}
We first make precise the definition of an important class of paths: \emph{treks}. These are paths that do not contain colliders. We adopt the notation of \citet{ FoyRA12}. A trek $\tau$ from $i$ to $j$ can have one of the following forms:
\begin{align*}
  v_l^L \leftarrow \cdots \leftarrow v_0^L \longleftrightarrow v_0^R \rightarrow \cdots \rightarrow v_r^R
\end{align*}
or
\begin{align*}
  v_l^L \leftarrow \cdots \leftarrow v_0 \rightarrow \cdots \rightarrow v_r^R,
\end{align*}
where $v_l^L = i$ and $v_r^R = j$ and in the second case $v_0 = v_0^L = v_0^R$. Accordingly, we define the \emph{left-hand side} of $\tau$ as $\mathrm{Left}(\tau) = v_l^L \leftarrow \cdots \leftarrow v_0^L$ and the \emph{right-hand side} of $\tau$ as $\mathrm{Right}(\tau) = v_0^R \rightarrow \cdots \rightarrow v_r^R$. Note that there is nothing inherently directional about a trek other that the (arbitrary) definition which end node is on the left. That is, every trek from $i$ to $j$ is also a trek from $j$ to $i$ just with the left and right sides switched. We denote the lengths of the left- and right-hand sides of a trek $\tau$ by $\lambda_L(\tau)$ and $\lambda_R(\tau)$ respectively. If $\tau$ does not contain a bidirected edge, we define its \emph{head} to be $H_\tau = v_0$. If the left- and right-hand sides of $\tau$ do not intersect (except possibly at $H_\tau$), we call $\tau$ \emph{simple}\footnote{Note that each side might well be self-intersecting, if the corresponding graph is cyclic.}. We define the following sets that will be useful later:
\begin{align*}
  \mathcal{D}_G^{ij} &= \{ \pi ~|~ \pi \mathrm{~is~a~directed~path~from~} i \mathrm{~to~} j \mathrm{~in~} G \}, \\
  \mathcal{T}_G^{ij} &= \{ \tau ~|~ \tau \mathrm{~is~a~trek~from~} i \mathrm{~to~} j \mathrm{~in~} G \}, \\
  \mathcal{S}_G^{ij} &= \{ \tau ~|~ \tau \mathrm{~is~a~simple~trek~from~} i \mathrm{~to~} j \mathrm{~in~} G \}.
\end{align*}
We will usually drop the subscript if it is clear from context which is the reference graph.

We now show some intermediate results that are well-known, but we prove them here for completeness nevertheless. All of these apply more generally to path diagrams (possibly cyclic and with bows).

\begin{lemma} \label{thm:geometric}
  Let $B \in \mathbb{R}^{d \times d}$ such that every eigenvalue $\lambda$ of $B$ satisfies $| \lambda | < 1$. Then $(I-B)^{-1}$ exists and is equal to $\sum_{s=0}^\infty B^s$.
\end{lemma}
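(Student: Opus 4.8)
The plan is to prove the Neumann series identity $(I-B)^{-1} = \sum_{s=0}^\infty B^s$ under the spectral radius condition $\rho(B) < 1$ in two stages: first establish convergence of the partial sums, then verify that the limit is the inverse of $I - B$.

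\textbf{Convergence.} First I would note that $I - B$ is invertible: if it were not, then $1$ would be an eigenvalue of $B$, contradicting $|\lambda| < 1$ for all eigenvalues $\lambda$. Next, to show $\sum_{s=0}^\infty B^s$ converges, I would show that $\|B^s\| \to 0$ geometrically for a suitable submultiplicative matrix norm. The cleanest route is to use the spectral radius formula $\rho(B) = \lim_{s\to\infty} \|B^s\|^{1/s}$ (Gelfand's formula), or, to keep things more elementary, to invoke the standard fact that for any $\varepsilon > 0$ there exists a submultiplicative norm $\|\cdot\|_\varepsilon$ with $\|B\|_\varepsilon \le \rho(B) + \varepsilon$; choosing $\varepsilon$ so that $r := \rho(B) + \varepsilon < 1$ gives $\|B^s\|_\varepsilon \le r^s$, so the partial sums $S_N = \sum_{s=0}^N B^s$ form a Cauchy sequence in the (complete, finite-dimensional) space of $d \times d$ matrices and hence converge to some limit $S = \sum_{s=0}^\infty B^s$. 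An alternative, even more self-contained, argument uses the Jordan form: writing $B = P J P^{-1}$, each Jordan block $J_k$ with eigenvalue $\lambda_k$ satisfies $\|J_k^s\| \to 0$ because the entries are polynomials in $s$ times $\lambda_k^{s}$ with $|\lambda_k| < 1$, and polynomial growth is dominated by geometric decay.

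\textbf{Identification of the limit.} Once convergence is in hand, I would compute $(I-B) S_N = \sum_{s=0}^N B^s - \sum_{s=0}^N B^{s+1} = I - B^{N+1}$ by telescoping. Letting $N \to \infty$ and using $B^{N+1} \to 0$ together with continuity of matrix multiplication gives $(I-B) S = I$. Similarly $S_N (I-B) = I - B^{N+1} \to I$, so $S(I-B) = I$ as well. Hence $S$ is a two-sided inverse of $I-B$, i.e. $(I-B)^{-1} = S = \sum_{s=0}^\infty B^s$, which is the claim.

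\textbf{Main obstacle.} The only genuinely nontrivial point is the convergence step, specifically the passage from the eigenvalue condition $\rho(B) < 1$ to a quantitative decay bound $\|B^s\| \le C r^s$ with $r < 1$. The subtlety is that $\|B\|$ itself (in, say, the operator $2$-norm) can exceed $1$ even when $\rho(B) < 1$, due to non-normality, so one cannot simply bound $\|B^s\| \le \|B\|^s$. The Jordan-form argument sidesteps this cleanly at the cost of a short computation of $\|J_k^s\|$; the adapted-norm argument is slicker but relies on a standard lemma. Either way this is routine linear algebra, and the telescoping identification of the limit is immediate.
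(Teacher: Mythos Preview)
Your proof is correct and follows essentially the same route as the paper: show $I-B$ is invertible because $1$ is not an eigenvalue, use the telescoping identity $(I-B)\sum_{s=0}^N B^s = I - B^{N+1}$, and pass to the limit using $B^{N+1}\to 0$. The only difference is that you justify $B^s\to 0$ carefully (via an adapted norm or Jordan form), whereas the paper simply asserts that the eigenvalue condition implies $\lim_{l\to\infty} B^l = 0$ and moves on.
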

\begin{proof}
  First note that $\det (\lambda I - B) = 0$ only if $| \lambda | < 1$, hence $\det (I-B) \ne 0$, and therefore $(I-B)^{-1}$ exists. The eigenvalue condition also implies $\lim_{l \rightarrow \infty} B^l = 0$, therefore
  \begin{align*}
    (I-B) \sum_{s=0}^\infty B^s = \lim_{l \rightarrow \infty} \sum_{s=0}^l ( B^s - B^{s+1} ) = \lim_{l \rightarrow \infty} (I - B^{l+1}) = I,
  \end{align*}
  and the result follows.
\end{proof}

\begin{lemma} \label{thm:path}
  Let $G$ be a path diagram over $d$ nodes and $B \in \mathcal{B}_G$. Then
  \begin{align*}
    (B^l)_{ij} = \sum_{\substack{\pi \in \mathcal{D}^{ji} \\ \lambda (\pi) = l}} \prod_{s \rightarrow t \in \pi} B_{ts}.
  \end{align*}
\end{lemma}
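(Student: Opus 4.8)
The plan is to prove $(B^l)_{ij} = \sum_{\pi \in \mathcal{D}^{ji},\ \lambda(\pi)=l} \prod_{s \rightarrow t \in \pi} B_{ts}$ by induction on $l$, unwinding the definition of matrix multiplication and matching it with the combinatorics of directed paths. First I would fix the key dictionary: $B \in \mathcal{B}_G$ means $B_{ts} \neq 0$ only if $s \rightarrow t \in G$, so $B_{ts}$ is naturally attached to the edge $s \rightarrow t$; and a directed path $\pi \in \mathcal{D}^{ji}$ runs from $j$ to $i$, i.e.\ $\pi = (v_0 = j, v_1, \dots, v_l = i)$ with each $v_k \rightarrow v_{k+1} \in G$. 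The product $\prod_{s \rightarrow t \in \pi} B_{ts}$ over the edges of such a path is then $B_{v_1 v_0} B_{v_2 v_1} \cdots B_{v_l v_{l-1}}$, which is exactly the string of matrix entries whose product appears in the $(i,j)$ entry of an $l$-fold matrix product (note the index reversal: $(B^l)_{ij}$ pairs with a path \emph{to} $i$ \emph{from} $j$, which is why $\mathcal{D}^{ji}$ and not $\mathcal{D}^{ij}$ appears).

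For the base case $l = 1$: $(B^1)_{ij} = B_{ij}$, and $\mathcal{D}^{ji}$ restricted to length-$1$ paths is empty unless $j \rightarrow i \in G$, in which case it contains the single path $(j,i)$ with edge product $B_{ij}$; when $j \rightarrow i \notin G$ we have $B_{ij} = 0$ by the assumption $B \in \mathcal{B}_G$, so both sides agree. (One may also take $l=0$ as base case: $(B^0)_{ij} = \delta_{ij}$ and the only length-$0$ directed path is the trivial path at $i=j$.) For the inductive step, assume the formula for $l$ and write $(B^{l+1})_{ij} = \sum_{k=1}^d (B^l)_{ik} B_{kj}$. Apply the induction hypothesis to $(B^l)_{ik}$ to rewrite this as a double sum over $k$ and over length-$l$ directed paths from $k$ to $i$, each weighted by its edge product times $B_{kj}$. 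The term $B_{kj}$ is the weight of the edge $j \rightarrow k$, and it is zero unless $j \rightarrow k \in G$; so the only surviving terms are those where we prepend the edge $j \rightarrow k$ to a length-$l$ directed path from $k$ to $i$, which is precisely a length-$(l+1)$ directed path from $j$ to $i$. Conversely, every length-$(l+1)$ directed path from $j$ to $i$ decomposes \emph{uniquely} as its first edge $j \rightarrow v_1$ followed by a length-$l$ directed path from $v_1$ to $i$. This bijection between $\{(k, \pi') : \pi' \in \mathcal{D}^{ki},\ \lambda(\pi')=l\}$ (with $j\rightarrow k \in G$) and $\{\pi \in \mathcal{D}^{ji} : \lambda(\pi) = l+1\}$, under which edge products multiply correctly, gives the result.

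The only mild subtlety — and the step I would be most careful about — is the bookkeeping of index order and edge orientation: making sure that $(B^l)_{ij}$ corresponds to paths in $\mathcal{D}^{ji}$ (from $j$ to $i$) rather than $\mathcal{D}^{ij}$, and that the edge label convention ``$B_{ts}$ is attached to $s \rightarrow t$'' is applied consistently when prepending the new edge. There is no real analytic or combinatorial obstacle here; this is essentially the standard fact that powers of an adjacency-type matrix count weighted walks, specialized to directed edges, and the whole argument is routine once the conventions are pinned down. I would also remark that no acyclicity is needed — the statement holds for arbitrary path diagrams, as the paper notes — so the walks counted may revisit vertices; ``path'' here is used in the paper's sense of a possibly overlapping sequence of nodes, consistent with the definitions given earlier.
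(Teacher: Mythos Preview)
Your proof is correct and follows essentially the same approach as the paper: induction on $l$, with the base case $l=1$ handled by the definition of $\mathcal{B}_G$ and the inductive step by expanding the matrix product and matching terms with directed paths via a unique edge-peeling decomposition. The only cosmetic difference is that you peel off the \emph{first} edge of the path (writing $B^{l+1} = B^l \cdot B$), whereas the paper peels off the \emph{last} edge (writing $B^l = B \cdot B^{l-1}$); both are equally valid and the argument is otherwise identical.
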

\begin{proof}
  By induction on $l$. For $l=1$ the claim follows from the definition of $\mathcal{B}_G$. Using the inductive hypothesis we get
  \begin{align*}
    (B^l)_{ij} = (B B^{l-1})_{ij} = \sum_{k=1}^d B_{ik} (B^{l-1})_{kj} = \sum_{k=1}^d B_{ik} \sum_{\substack{\pi \in \mathcal{D}^{jk} \\ \lambda (\pi) = l-1}} \prod_{s \rightarrow t \in \pi} B_{ts},
  \end{align*}
  and the claim follows, since every directed path from $j$ to $i$ of length $l$ can be decomposed into a directed path $\pi$ of length $l-1$ from $j$ to some node $k$ and the edge $k \rightarrow i$.
\end{proof}

\begin{lemma}
  Let $G$ be an acyclic path diagram over $d$ nodes and $B \in \mathcal{B}_G$. Then $(I-B)^{-1} = I + B + \ldots + B^{d-1}$.
\end{lemma}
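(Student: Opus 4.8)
The plan is to derive the identity from the Neumann-series formula of Lemma~\ref{thm:geometric}, once I show that $B$ is nilpotent with $B^d = 0$; the truncation $I + B + \cdots + B^{d-1}$ then appears simply because all higher powers of $B$ vanish.

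First I would establish that $B^l = 0$ whenever $l \ge d$. By Lemma~\ref{thm:path}, the entry $(B^l)_{ij}$ is a sum indexed by directed paths from $j$ to $i$ of length exactly $l$, so it is enough to argue that $\mathcal{D}^{ji}$ contains no directed path of length $l \ge d$ for any $i,j$. Suppose $\pi = (v_0, \ldots, v_l)$ were such a path with $l \ge d$. Then $\pi$ lists $l+1 \ge d+1$ vertices drawn from a set of size $d$, so by the pigeonhole principle $v_a = v_b$ for some $a < b$, and the subpath $(v_a, \ldots, v_b)$ is a directed path from $v_a$ to itself of positive length. Since $G$ has no self-loops, this length is at least $2$, so the edge $v_a \to v_{a+1}$ and the subpath $(v_{a+1}, \ldots, v_b = v_a)$ exhibit directed paths between the two \emph{distinct} vertices $v_a$ and $v_{a+1}$ in both directions, contradicting the acyclicity of $G$. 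Hence $(B^l)_{ij} = 0$ for all $i,j$ and all $l \ge d$.

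Then I would invoke Lemma~\ref{thm:geometric}: since $B^d = 0$, every eigenvalue $\lambda$ of $B$ satisfies $\lambda^d = 0$, hence $\lambda = 0$, so in particular $|\lambda| < 1$. Therefore $(I-B)^{-1}$ exists and equals $\sum_{s=0}^{\infty} B^s$. By the previous step every summand with $s \ge d$ is the zero matrix, so the series collapses to $(I-B)^{-1} = I + B + \cdots + B^{d-1}$, which is the claim.

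I do not expect a genuine obstacle here. The only step needing a little care is relating a ``long directed walk'' to a directed cycle in a way that matches this paper's definition of a cyclic graph, which is phrased in terms of two distinct nodes; the argument above does this. As an alternative to the pigeonhole step, one could relabel the vertices according to a topological order of the directed part of $G$ — possible precisely because $G$ is acyclic — so that $B$ becomes strictly lower triangular and hence nilpotent with $B^d = 0$; the remainder of the argument is unchanged.
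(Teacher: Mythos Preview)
Your proof is correct and follows essentially the same approach as the paper: both establish that $B$ is nilpotent with all eigenvalues zero, invoke Lemma~\ref{thm:geometric} for the Neumann series, and use Lemma~\ref{thm:path} to truncate the series at $B^{d-1}$. The only cosmetic difference is that the paper obtains nilpotency via a topological ordering making $B$ strictly lower triangular (the alternative you mention at the end), whereas your main argument derives it from the pigeonhole bound on path lengths together with Lemma~\ref{thm:path}.
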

\begin{proof}
  Since $G$ is acyclic, there is an ordering of the nodes, such that $B$ is strictly lower triangular and hence all its eigenvalues are zero. Furthermore, the longest directed path in $G$ has length $d-1$. Therefore the result follows from Lemma~\ref{thm:geometric} and Lemma~\ref{thm:path}.
\end{proof}

The following theorem is a version of Wright's theorem that applies to non-standardized variables. It does not require a proper parametrization (in the sense that $\Omega$ needs to be positive definite). This result is probably known to experts, but we could not find a proof in the literature.

\begin{theorem} \label{thm:unstandardised}
  Let $G$ be a (possibly cyclic) path diagram over $d$ nodes, $B \in \mathcal{B}_G$, and $\Omega \in \mathbb{R}^{d \times d}$ such that $\Omega$ is symmetric (but not necessarily positive definite) and $\Omega_{ij}=0$ if $i \leftrightarrow j$ is not an edge in $G$. Then the entries of the matrix $\phi = (I-B)^{-1} \Omega (I-B)^{-T}$ are given by
  \begin{align*}
    \phi_{ij} &= \sum_{\substack{\tau \in \mathcal{S}^{ij} \\ \leftrightarrow \in \tau}} \prod_{s \rightarrow t \in \tau} B_{ts} \prod_{s \leftrightarrow t \in \tau} \Omega_{st} + \sum_{\substack{\tau \in \mathcal{S}^{ij} \\ \leftrightarrow \notin \tau}} \prod_{s \rightarrow t \in \tau} B_{ts} \cdot \phi_{H_\tau H_\tau} \qquad (i \ne j), \\
    \phi_{ii} &= \sum_{\substack{\tau \in \mathcal{T}^{ii} \\ \leftrightarrow \in \tau}} \prod_{s \rightarrow t \in \tau} B_{ts} \prod_{s \leftrightarrow t \in \tau} \Omega_{st} + \sum_{\substack{\tau \in \mathcal{T}^{ii} \\ \leftrightarrow \notin \tau}} \prod_{s \rightarrow t \in \tau} B_{ts} \cdot \Omega_{H_\tau H_\tau} + \Omega_{ii}.
  \end{align*}
\end{theorem}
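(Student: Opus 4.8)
The plan is to prove the two formulas for $\phi_{ij}$ and $\phi_{ii}$ by expanding $(I-B)^{-1}$ as a power series and carefully matching the resulting sums over pairs of directed paths with the trek decomposition. Concretely, since $G$ may be cyclic I cannot invoke acyclicity, so I first restrict to the case where every eigenvalue of $B$ has modulus less than $1$ (this is a dense open condition, or one can argue via formal power series / rational functions since both sides are rational in the entries of $B$ and $\Omega$, so equality on an open set suffices). Under this condition Lemma~\ref{thm:geometric} gives $(I-B)^{-1} = \sum_{s \ge 0} B^s$, and Lemma~\ref{thm:path} identifies $(B^s)_{ij}$ with the sum over directed paths of length $s$ from $j$ to $i$ of the product of edge weights.

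The main computation is then
\begin{align*}
  \phi_{ij} = \big((I-B)^{-1}\Omega(I-B)^{-T}\big)_{ij} = \sum_{k,l} \big((I-B)^{-1}\big)_{ik}\, \Omega_{kl}\, \big((I-B)^{-1}\big)_{jl},
\end{align*}
and substituting the path expansions for the two $(I-B)^{-1}$ factors turns this into a sum over triples: a directed path from $k$ to $i$, an ``edge'' $\Omega_{kl}$ (which is nonzero only if $k=l$ or $k \leftrightarrow l \in G$), and a directed path from $l$ to $j$. I would package such a triple as a trek from $i$ to $j$: when $k \ne l$ it has the form $\mathrm{Left}$-side $v_l^L \leftarrow \cdots \leftarrow v_0^L = k$, then $k \leftrightarrow l$, then $l = v_0^R \rightarrow \cdots \rightarrow v_r^R = j$, contributing $\prod_{s\to t} B_{ts} \prod_{s\leftrightarrow t}\Omega_{st}$; when $k = l$ the contribution is $\prod_{s\to t\in\tau} B_{ts}$ times the diagonal factor $\Omega_{kk}$, with $k = H_\tau$ the head of the (bidirected-edge-free) trek. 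This recovers exactly the two sums in the claimed identity, \emph{except} that the raw expansion ranges over \emph{all} such treks, not just simple ones, and in the $k=l$ (no-$\leftrightarrow$) case I get a bare $\Omega_{H_\tau H_\tau}$ rather than $\phi_{H_\tau H_\tau}$. For the diagonal formula $\phi_{ii}$ the same bookkeeping applies but there is the extra $s=0$ term in both series simultaneously, giving the standalone $+\,\Omega_{ii}$, and the treks are taken from $\mathcal{T}^{ii}$ (closed treks) rather than simple ones.

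The crux — and the step I expect to be the real obstacle — is reconciling the sum over all treks with the sum over \emph{simple} treks while simultaneously upgrading $\Omega_{H_\tau H_\tau}$ to $\phi_{H_\tau H_\tau}$ in the first formula. The idea is that a non-simple trek has its left and right sides intersecting at some node other than the head; taking the \emph{last} such shared node (nearest the head on both sides), one can cut the trek there and see the ``outer'' part as a shorter trek whose head is that node, with the two ``inner'' pieces forming a directed path into that node on each side — i.e.\ precisely a contribution to $\phi_{H H}$ at that node. Resumming all treks according to this last intersection point collapses the over-counted non-simple treks into the $\phi_{H_\tau H_\tau}$ factors multiplying simple treks, which is exactly why $\phi$ (not $\Omega$) appears on the right-hand side of the off-diagonal formula. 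I would make this precise by an induction on path length (or on the number of vertices, peeling off a childless vertex as in Lemma~\ref{lem:ident}), checking that the recursive identity $\phi_{ij} = \sum_{\tau \in \mathcal{S}^{ij},\,\leftrightarrow\in\tau}(\cdots) + \sum_{\tau\in\mathcal{S}^{ij},\,\leftrightarrow\notin\tau}(\prod B_{ts})\phi_{H_\tau H_\tau}$ is consistent with the full power-series expansion; the diagonal case feeds into this as the base/auxiliary identity. The rest is careful but routine path combinatorics.
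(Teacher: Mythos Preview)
Your plan is essentially the paper's proof: expand $(I-B)^{-1}$ as a geometric series via Lemma~\ref{thm:geometric}, use Lemma~\ref{thm:path} to read $(B^k\Omega(B^l)^T)_{ij}$ as a sum over treks with prescribed left and right lengths, sum over $k,l$ to obtain $\phi_{ij}=\sum_{\tau\in\mathcal{T}^{ij}}c(\tau)+\Omega_{ii}\mathbbm{1}\{i=j\}$ (which is already the diagonal formula), and then for $i\neq j$ factor each trek into a simple outer piece $\xi$ and a closed inner piece $\rho\in\mathcal{T}^{H_\xi H_\xi}$ so that the inner sum $\sum_\rho c(\rho)+\Omega_{H_\xi H_\xi}$ collapses to $\phi_{H_\xi H_\xi}$ by the diagonal identity just established. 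No induction is needed; the paper carries this out directly.

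One correction: the cut must be taken at the shared node encountered \emph{first from the endpoints} (walking inward from $i$ and $j$), not the one nearest the head as you write. With your choice the outer piece can still have left/right intersections further out and therefore fail to lie in $\mathcal{S}^{ij}$, so the decomposition $\tau\mapsto(\xi,\rho)$ would not be well-defined as a map into simple treks. With the correct cut the factorisation $c(\tau)=c_e(\xi(\tau))\cdot c(\rho(\tau))$ (or $c_e(\xi(\tau))\cdot\Omega_{H_\xi H_\xi}$ when $\rho=\varnothing$) goes through exactly as in the paper. Separately, your fallback of peeling off a childless vertex cannot work when $G$ is cyclic, since no such vertex need exist; the direct resummation already handles that case.
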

\begin{proof}
  Let us write
  \begin{align*}
    c_e(\tau; B, \Omega) = \prod_{s \rightarrow t \in \tau} B_{ts} \prod_{s \leftrightarrow t \in \tau} \Omega_{st}
  \end{align*}
  as a shorthand for the edge contribution\footnote{That is, the contribution depending only on the edge labels and not the diagonal elements of $\Omega$.} of a trek $\tau$ given parameter matrices $B$ and $\Omega$. We write $c(\tau; B, \Omega) = c_e(\tau; B, \Omega) \cdot \Omega_{H_\tau H_\tau}$ for the total contribution of $\tau$ (where we define $\Omega_{H_\tau H_\tau}$ to be $1$ if $\tau$ contains a bidirected edge and therefore $H_\tau=\varnothing$).

  Using Lemma~\ref{thm:geometric}, we can expand $\phi$ as $\phi = \sum_{k=0}^\infty \sum_{l=0}^\infty B^k \Omega (B^l)^T$. We now first show the following intermediate result, which interprets the entries of these matrices as contributions of certain treks:
  \begin{align} \label{eq:summand}
    ( B^k \Omega (B^l)^T )_{ij} = \sum_{\substack{\tau \in \mathcal{T}^{ij} \\ \lambda_L(\tau) = k \\ \lambda_R(\tau) = l}} c(\tau; B, \Omega) + \Omega_{ii} \mathbbm{1}\{i=j\},
  \end{align}
  for integers $k \ge 0,~ l \ge 0$. To see this, we expand the double matrix product and use Lemma~\ref{thm:path} to get
  \begin{align*}
    ( B^k \Omega (B^l)^T )_{ij} &= \sum_{a=1}^d \sum_{b=1}^d (B^k)_{ia} \Omega_{ab} (B^l)_{jb} \\
    &= \sum_{a=1}^d \sum_{b=1}^d \left( \sum_{\substack{\pi \in \mathcal{D}^{ai} \\ \lambda(\pi) = k}} \prod_{s \rightarrow t \in \pi} B_{ts} \right) \Omega_{ab} \left( \sum_{\substack{\pi \in \mathcal{D}^{bj} \\ \lambda(\pi) = l}} \prod_{s \rightarrow t \in \pi} B_{t s} \right),
  \end{align*}
  and \eqref{eq:summand} follows since each bracketed expression corresponds to one side of the trek from $i$ to $j$ via $a$ and $b$ (and the diagonal entries of $\Omega$ do not correspond to a trek, so they are separate). Now summing over $k$ and $l$ gives the following
  \begin{align} \label{eq:nonsimple}
    \phi_{ij} = \sum_{\tau \in \mathcal{T}^{ij}} c(\tau; B, \Omega) + \Omega_{ii} \mathbbm{1} \{ i=j \},
  \end{align}
  which gives the result for the diagonal entries $\phi_{ii}$.

  For the off-diagonal entries $\phi_{ij}$, we can get a simpler expression involving only simple treks and the diagonal entries $\phi_{ii}$. Note that every trek $\tau$ can be uniquely decomposed into a simple part $\xi(\tau)$ and a (possibly empty) non-simple part $\rho(\tau)$ (we just split at the point, where the right- and the left-hand sides of $\tau$ first intersect). Since
  \begin{align*}
    c(\tau) = \begin{cases} c_e(\xi(\tau)) \cdot \Omega_{H_{\xi(\tau)} H_{\xi(\tau)}} & \mathrm{if~} \rho(\tau) = \varnothing \\ c_e(\xi(\tau)) \cdot c(\rho(\tau)) & \mathrm{otherwise}, \end{cases}
  \end{align*}
  (dropping the parameter matrices $B$ and $\Omega$ in our notation), we can factor out the contributions of the simple parts. Note that if the simple part $\xi(\tau)$ contains a bidirected edge, then $\rho(\tau)$ must be empty and $\Omega_{H_\xi(\tau) H_\xi(\tau)} = 1$. Hence~\eqref{eq:nonsimple} becomes
  \begin{align*}
    \phi_{ij} &= \sum_{\substack{\tau \in \mathcal{T}^{ij} \\ \leftrightarrow \in \xi(\tau)}} c(\tau) + \sum_{\substack{\tau \in \mathcal{T}^{ij} \\ \leftrightarrow \notin \xi(\tau) \\ \rho(\tau) \ne \varnothing}} c(\tau) + \sum_{\substack{\tau \in \mathcal{T}^{ij} \\ \leftrightarrow \notin \xi(\tau) \\ \rho(\tau) = \varnothing}} c(\tau) \\
    &= \sum_{\substack{\tau \in \mathcal{T}^{ij} \\ \leftrightarrow \in \xi(\tau)}} c_e (\xi(\tau)) + \sum_{\substack{\tau \in \mathcal{T}^{ij} \\ \leftrightarrow \notin \xi(\tau) \\ \rho(\tau) \ne \varnothing}} c_e(\xi(\tau)) \cdot c(\rho(\tau)) + \sum_{\substack{\tau \in \mathcal{T}^{ij} \\ \leftrightarrow \notin \xi(\tau) \\ \rho(\tau) = \varnothing}} c_e(\xi(\tau)) \cdot \Omega_{H_{\xi(\tau)} H_{\xi(\tau)}} \\
    &= \sum_{\substack{\xi \in \mathcal{S}^{ij} \\ \leftrightarrow \in \xi}} c_e (\xi) + \sum_{\substack{\xi \in \mathcal{S}^{ij} \\ \leftrightarrow \notin \xi}} c_e(\xi) \sum_{\rho \in \mathcal{T}^{H_\xi H_\xi}} c(\rho) + \sum_{\substack{\xi \in \mathcal{S}^{ij} \\ \leftrightarrow \notin \xi}} c_e(\xi) \cdot \Omega_{H_\xi H_\xi} \\
    &= \sum_{\substack{\xi \in \mathcal{S}^{ij} \\ \leftrightarrow \in \xi}} c_e (\xi) + \sum_{\substack{\xi \in \mathcal{S}^{ij} \\ \leftrightarrow \notin \xi}} c_e(\xi) \left( \sum_{\rho \in \mathcal{T}^{H_\xi H_\xi}} c(\rho) + \Omega_{H_\xi H_\xi} \right),
  \end{align*}
  and the result follows.
\end{proof}

The following version for standardized parameters is often quoted as Wright's theorem.
\begin{theorem} \label{thm:wright}
  Let $G$ be a (not necessarily acyclic) path diagram over $d$ nodes, $B \in \mathcal{B}_G$, and $\Omega \in \mathbb{R}^{d \times d}$ such that $\Omega$ is symmetric (but not necessarily positive definite) and $\Omega_{ij}=0$ if $i \leftrightarrow j$ is not an edge in $G$. Furthermore assume that we have standardized parameters $B, \Omega$ such that $(\phi(B,\Omega))_{ii} = 1$ for all $i$. Then the off-diagonal entries of $\phi(B,\Omega)$ are given by
  \begin{align*}
    (\phi(B, \Omega))_{ij} = \sum_{\tau \in \mathcal{S}^{ij}} \prod_{s \rightarrow t \in \tau} B_{ts} \prod_{s \leftrightarrow t \in \tau} \Omega_{st}.
  \end{align*}
\end{theorem}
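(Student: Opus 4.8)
The plan is to obtain Theorem~\ref{thm:wright} as an immediate specialization of Theorem~\ref{thm:unstandardised}, which has already been established for arbitrary (possibly non-positive-definite) symmetric $\Omega$ and possibly cyclic $G$. Concretely, I would start from the off-diagonal identity proved there,
\begin{align*}
  \phi_{ij} &= \sum_{\substack{\tau \in \mathcal{S}^{ij} \\ \leftrightarrow \in \tau}} \prod_{s \rightarrow t \in \tau} B_{ts} \prod_{s \leftrightarrow t \in \tau} \Omega_{st} + \sum_{\substack{\tau \in \mathcal{S}^{ij} \\ \leftrightarrow \notin \tau}} \prod_{s \rightarrow t \in \tau} B_{ts} \cdot \phi_{H_\tau H_\tau} \qquad (i \ne j),
\end{align*}
and then invoke the standardization hypothesis $\phi_{kk} = (\phi(B,\Omega))_{kk} = 1$ for all $k$.

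The key observations are then purely bookkeeping. For any simple trek $\tau$ that contains no bidirected edge, its head $H_\tau = v_0$ is a single vertex, so $\phi_{H_\tau H_\tau} = 1$ by standardization; hence the second sum above loses its $\phi_{H_\tau H_\tau}$ factor and becomes $\sum_{\tau \in \mathcal{S}^{ij},\, \leftrightarrow \notin \tau} \prod_{s \rightarrow t \in \tau} B_{ts}$. Moreover, for such a trek the product $\prod_{s \leftrightarrow t \in \tau} \Omega_{st}$ ranges over the empty set of bidirected edges and therefore equals $1$, so each summand can be rewritten as $\prod_{s \rightarrow t \in \tau} B_{ts} \prod_{s \leftrightarrow t \in \tau} \Omega_{st}$. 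With this, the two sums have summands of identical shape and can be merged into the single sum $\sum_{\tau \in \mathcal{S}^{ij}} \prod_{s \rightarrow t \in \tau} B_{ts} \prod_{s \leftrightarrow t \in \tau} \Omega_{st}$, which is exactly the claimed formula.

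I do not anticipate a real obstacle here: all the analytic content (the Neumann-series expansion of $(I-B)^{-1}$ via Lemma~\ref{thm:geometric}, the path interpretation of $B^l$ via Lemma~\ref{thm:path}, the decomposition of a general trek into a simple part and a non-simple part) is already packaged inside Theorem~\ref{thm:unstandardised}. The only point requiring a sentence of care is making the empty-product convention explicit so that treks with and without a bidirected edge can be written uniformly, and noting that $H_\tau$ is genuinely a single vertex in the bow-free/acyclic-free generality allowed here, so that $\phi_{H_\tau H_\tau}$ is a well-defined diagonal entry equal to $1$. I would close by remarking that the diagonal entries are not part of the statement (they are pinned to $1$ by assumption), so no further computation is needed.
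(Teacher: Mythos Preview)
Your proposal is correct and is exactly the paper's approach: the paper's proof consists of the single sentence ``This is a direct consequence of Theorem~\ref{thm:unstandardised}.'' Your write-up simply spells out that direct consequence---substituting $\phi_{H_\tau H_\tau}=1$ under the standardization hypothesis and using the empty-product convention to merge the two sums---which is precisely the intended argument.
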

\begin{proof}
  This is a direct consequence of Theorem~\ref{thm:unstandardised}.
\end{proof}

We can now prove Theorem~\ref{thm:sufficient}, which is a consequence of Wright's formula.
\begin{proof}[Proof of Theorem~\ref{thm:sufficient}]
  Let $\theta_{G_1} \in \bar{\Theta}_{G_1}$ and choose $\theta_{G_2} = (B_2, \Omega_2)$ such that their edge labels agree, that is,
  \begin{align*}
    (B_2)_{ij} = \begin{cases} (B_1)_{ij} & \mathrm{if~} i \leftarrow j \in G_1, i \leftarrow j \in G_2, \\ (B_1)_{ji} & \mathrm{if~} i \rightarrow j \in G_1, i \leftarrow j \in G_2, \\ (\Omega_1)_{ij} & \mathrm{if~} i \leftrightarrow j \in G_1, i \leftarrow j \in G_2, \\ 0 & \mathrm{if~} i \leftarrow j \notin G_2, \end{cases}
  \end{align*}
  and
  \begin{align*}
    (\Omega_2)_{ij} = \begin{cases} (B_1)_{ij} & \mathrm{if~} i \leftarrow j \in G_1, i \leftrightarrow j \in G_2, \\ (B_1)_{ji} & \mathrm{if~} i \rightarrow j \in G_1, i \leftrightarrow j \in G_2, \\ (\Omega_1)_{ij} & \mathrm{if~} i \leftrightarrow j \in G_1, i \leftrightarrow j \in G_2, \\ 0 & \mathrm{if~} i \leftrightarrow j \notin G_2. \end{cases}
  \end{align*}
  This is possible since $G_1$ and $G_2$ have the same skeleton: we just assign the edge labels of $G_1$ to $G_2$, irrespective of the edge type. The diagonal entries of $\Omega_2$ are still free---we now show that they can be used to enforce
  \begin{align} \label{eq:standardisation}
    (\phi(B_2, \Omega_2))_{ii} = 1
  \end{align}
  for all $i$, which defines a linear system for the diagonal entries of $\Omega_2$. Let $\mathbf{d} = \mathrm{diag}(\Omega_2)$ be the vector consisting of the diagonal elements of $\Omega_2$, and write~\eqref{eq:standardisation} as $M \mathbf{d} + \mathbf{c} = \mathbf{1}$, where $M$ is the coefficient matrix of the linear system, and $\mathbf{c}$ is constant. To show that~\eqref{eq:standardisation} always has a solution, we need to show that $\mathrm{det}(M) \ne 0$. Without loss of generality, assume that the nodes are topologically ordered according to $G_2$ (this is possible since $G_2$ is assumed to be acyclic), that is, there is no directed path from $i$ to $j$ if $i > j$. Then we have $H_\tau < i$ (or $H_\tau = \varnothing$) for all $\tau \in \mathcal{T}^{ii}$, and using the expression for $\phi_{ii}$ in Theorem~\ref{thm:unstandardised} we see that $M$ must be lower triangular with diagonal equal to $1$. Thus $\mathrm{det}(M) = 1$, and we can enforce~\eqref{eq:standardisation}.

  Since $G_1$ and $G_2$ share the same collider triples, the sets of simple treks between any two nodes are the same in both graphs: $\mathcal{S}_{G_1}^{ij} = \mathcal{S}_{G_2}^{ij}$ $\forall i,j$. Together with Theorem~\ref{thm:wright} and the fact that the edge labels agree this shows that
  \begin{align} \label{sigmaequal}
    \phi (\theta_{G_1}) = \phi (\theta_{G_2}).
  \end{align}

  What is left to show is that $\Omega_2$ is a valid covariance matrix, that is, it is positive semi-definite. By \eqref{eq:covmat} and \eqref{sigmaequal} we have that
  \begin{align*}
    \Omega_2 = (I-B_2) \Sigma_1 (I-B_2)^T,
  \end{align*}
  where $\Sigma_1 = \phi (\theta_{G_1})$. Since $\Sigma_1$ is positive semidefinite, so is $\Omega_2$.
\end{proof}

\subsection{Likelihood Separation}
\label{sec:likelihoodseparation}
Since we can write $\bm\epsilon = \bm\epsilon (\mathbf{X})$ as a function of $\mathbf{X} = (X_1, \ldots, X_d)$, we have that their densities satisfy
\begin{align} \label{eq:joint.density}
  p_G^{\mathbf{X}}(X_1, \ldots, X_d) = p_G^{\bm\epsilon} (\epsilon_1 (\mathbf{X}), \ldots, \epsilon_d (\mathbf{X})).
\end{align}
The joint density for the errors $\bm\epsilon$ can be factorized according to the independence structure implied by $\Omega$. Let us adopt the notation $\{X\}_I \coloneqq \{ X_i \}_{i \in I}$ and $\{\epsilon\}_I \coloneqq \{ \epsilon_i \}_{i \in I}$ for some index set $I$. Then we have $\{\epsilon\}_{C_k} \ci \{\epsilon\}_{C_l}$ $\forall k \ne l$. Furthermore, we implicitly refer to marginalized densities via the arguments, i.e.\ we write $p_G^{\bm\epsilon} ( \{\epsilon\}_{C_1} )$ for the marginal density of $\{\epsilon\}_{C_1}$. We can thus write
\begin{align*}
  p_G^{\bm\epsilon} (\epsilon_1, \ldots, \epsilon_d) = p_G^{\bm\epsilon} ( \{\epsilon\}_{C_1} ) \cdots p_G^{\bm\epsilon} ( \{\epsilon\}_{C_K} ).
\end{align*}
Hence \eqref{eq:joint.density} becomes
\begin{align} \label{eq:joint.density2}
  p_G^{\mathbf{X}}(X_1, \ldots, X_d) = \prod_k p_G^{\bm\epsilon} \bigg( \Big\{ X_i - \sum_{j \in \mathrm{pa}(i)} B_{ij} X_j \Big\}_{i \in C_k} \bigg).
\end{align}
Each factor depends only on the nodes in the respective component $C_k$ and the parents of that component $\mathrm{pa}(C_k)$. By the same argument the joint density of the submodel $G_k$ is
\begin{align*}
  p_{G_k}^{\mathbf{X}} ( \{X\}_{V_k} ) &= p_{G_k}^{\bm\epsilon} ( \{\epsilon\}_{C_k} ) \prod_{j \in \mathrm{pa}(C_k) \setminus C_k} p_{G_k}^{\bm\epsilon} ( \epsilon_j ) \\
  &= p_{G_k}^{\bm\epsilon} \bigg( \Big\{ X_i - \sum_{j \in \mathrm{pa}(i)} B_{ij} X_j \Big\}_{i \in C_k} \bigg) \prod_{j \in \mathrm{pa}(C_k) \setminus C_k} p_{G_k}^{\bm\epsilon} ( X_j ).
\end{align*}
This factorization is symbolic, since the parents $\{ X_j \}_{j \in \mathrm{pa}(C_k) \setminus C_k}$ will not be independent in general. This does not matter, however, since these terms cancel when reconstructing the full density $p_G^X(X_1, \ldots, X_d)$ later. The advantage of this symbolic factorization is that we can still \emph{fit} the (wrong) submodel and then use the easier to compute product of marginal densities to reconstruct the full density, rather than doing the same with the actual submodel factorization and the joint density of the component parents.

Note that
\begin{align*}
  p_{G_k}^{\bm\epsilon} \bigg( \Big\{ X_i - \sum_{j \in \mathrm{pa}(i)} B_{ij} X_j \Big\}_{i \in C_k} \bigg) = p_{G}^{\bm\epsilon} \bigg( \Big\{ X_i - \sum_{j \in \mathrm{pa}(i)} B_{ij} X_j \Big\}_{i \in C_k} \bigg),
\end{align*}
that is, the conditionals $\{ X \}_{C_k} | \{ X \}_{\mathrm{pa}(C_k) \setminus C_k}$ are the same in models $G$ and $G_k$. This is because the structural equations of $\{X\}_{C_k}$ are the same in these models. Note also that $p_{G_k}^{\bm\epsilon} ( X_j ) = p_{G_k}^{\mathbf{X}} ( X_j )$ for all $j \in \mathrm{pa}(C_k) \setminus C_k$ and all $k$, since $\mathrm{pa}(C_k) \setminus C_k$ are source nodes in model $G_k$ (all edges between them were removed).

Thus we can reconstruct the full joint density~\eqref{eq:joint.density2} from joint densities of the connected component submodels and marginal densities of the parent variables:
\begin{align*}
  p_G^{\mathbf{X}}(X_1, \ldots, X_d) = \prod_k p_{G_k}^{\mathbf{X}} ( \{X\}_{V_k} ) \cdot \left( \prod_{j \in \mathrm{pa}(C_k) \setminus C_k} p_{G_k}^{\mathbf{X}} ( X_j ) \right)^{-1}.
\end{align*}
Writing $\mathcal{D}$ for the observed data $\{ x_i^s \}$ (with $1 \le i \le d$ and $1 \le s \le n$), the log-likelihood can then be written as
\begin{align*}
  l (p_G^{\mathbf{X}}; \mathcal{D}) &= \sum_{s=1}^n \log p_G^{\mathbf{X}} (x_1^{(s)}, \ldots, x_p^{(s)}) \notag\\
  &= \sum_{s=1}^n \sum_k \left( \log p_{G_k}^{\mathbf{X}} (\{x_i^{(s)}\}_{i \in V_k}) - \sum_{j \in \mathrm{pa}(C_k) \setminus C_k} \log p_{G_k}^{\mathbf{X}} (x_j^{(s)}) \right) \notag\\
  &= \sum_k \left( l (p_{G_k}^{\mathbf{X}}; \{x_i^{(s)}\}^{s=1,\ldots,n}_{i \in V_k}) - \sum_{j \in \mathrm{pa}(C_k) \setminus C_k} l (p_{G_k}^{\mathbf{X}}; \{x_j^{(s)}\}^{s=1,\ldots,n}) \right),
\end{align*}
where $l (p_{G_k}^{\mathbf{X}}; \{x_j^{(s)}\}^{s=1,\ldots,n})$ refers to the likelihood of the $X_j$-marginal of $p_{G_k}^{\mathbf{X}}$.

\subsection{Symmetry and Irreducibility of Markov Chain}
\label{sec:mcmc}
We show that the transition matrix of the Markov Chain described in Algorithm~\ref{alg:mcmc} is symmetric and irreducible. For two BAPs $G, G'$, let $P(G, G')$ be the probability of a single step transition from $G$ to $G'$.

\begin{theorem}
We have
\begin{enumerate}
  \item Symmetry: $P(G, G') = P(G', G)$.
  \item Irreducibility: $\exists G_1, \ldots, G_n$ such that
  \begin{align*}
    P(G, G_1) \left( \prod_{i = 1}^{n-1} P(G_i, G_{i+1}) \right) P(G_{n-1}, G') > 0.
  \end{align*}
\end{enumerate}
\end{theorem}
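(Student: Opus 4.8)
The plan is to read both properties directly off the transition rules of Algorithm~\ref{alg:mcmc}: a single step is simple enough that one can enumerate exactly which outcomes of the sampled pair $\big((i,j),\sigma\big)$ produce which graph, so an exhaustive case analysis will do.

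For symmetry, the structural remark to start from is that one step either deletes a single existing edge, adds a single edge in a previously empty slot, or leaves $G$ unchanged — it never changes the type or orientation of an edge in one step. Hence $P(G,G')=0=P(G',G)$ unless $G=G'$ (the diagonal, for which there is nothing to check) or $G'$ arises from $G$ by adding or removing exactly one edge, so it suffices to treat that case. Up to reversal there are two sub-cases: (i) $G'$ is $G$ with one directed edge $i\to j$ removed — equivalently, $G$ is $G'$ with $i\to j$ added — and (ii) the same with a bidirected edge. In each sub-case I would express the one-step probability as $\tfrac{1}{2d(d-1)}$ times the number of pairs $\big((i,j),\sigma\big)$ that realise the move, and check that this count is the same whether computed forwards (as an addition) or backwards (as a removal). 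Two guards in the rules must be shown to be harmless here: rule 2a declines to add a directed edge when doing so would create a directed cycle, but in that case $G$ would fail to be a BAP, so the guard never blocks a transition between two BAPs; and rule 2b is only ever applied to an empty slot, so it cannot produce a bow. With the guards dismissed, the counting is routine.

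For irreducibility, it is enough to show that every BAP communicates with the edgeless BAP $G_\varnothing$, because concatenating a positive-probability walk from $G$ to $G_\varnothing$ with one from $G_\varnothing$ to $G'$ gives exactly the positive-probability path asserted in the statement. Descending is immediate: if $H$ has at least one edge, deleting any single edge again yields a BAP (a deletion can create neither a directed cycle nor a bow), and the corresponding step has probability at least $\tfrac{1}{2d(d-1)}$, obtained by sampling a position incident to that edge together with $\sigma=0$; iterating brings any BAP down to $G_\varnothing$. Ascending: fix a target BAP $G'$, choose a total order on $V$ compatible with the (acyclic) directed part of $G'$, add the directed edges of $G'$ one at a time in that order — each one points ``forward'', so no directed cycle is ever formed and rule 2a does not intervene — and then add the bidirected edges of $G'$ one at a time; at every such step the slot in question is empty because $G'$ is bow-free, so the move is legal and has positive probability. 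Composing a descent from $G$ with an ascent to $G'$ completes the argument.

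The one part I expect to require genuine care is the bookkeeping in sub-cases (i) and (ii) of the symmetry argument, especially the directed-edge case: one must track precisely which of the two ordered positions $(i,j)$ and $(j,i)$ can trigger a given addition or deletion, and argue cleanly that the acyclicity and bow-freeness guards forbid only steps whose target lies outside the class of BAPs, so that the restriction of the chain to BAPs stays symmetric. The structural remark, the diagonal, and the entire irreducibility argument are elementary.
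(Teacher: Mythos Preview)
Your proposal is correct and follows essentially the same route as the paper: a case split on whether the single differing edge is directed or bidirected for symmetry, and a path through the empty graph (delete all edges of $G$, then insert the edges of $G'$) for irreducibility. Your version is slightly more explicit about dismissing the acyclicity and bow-freeness guards and about ordering the directed-edge insertions topologically during the ascent, but the structure of the argument is the same.
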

\begin{proof}
Let $p$ be the probability of sampling one position $(i,j)$, i.e.\ $p = 1/(d(d-1))$. Let us first consider the case where $G$ and $G'$ only differ by one edge addition, i.e.\ WLOG either
\begin{itemize}
  \item
  $G = G' \cup i \rightarrow j$.
  \item
  $G = G' \cup i \leftrightarrow j$.
\end{itemize}
In both cases we get $P(G, G') = p/2 = P(G', G)$. In the first case, by multiplying the probabilities along the branches of~\ref{edge:remove} and~\ref{noedge:directed} in Algorithm~\ref{alg:mcmc} respectively, and since $G$ has no cycles. In the second case, by multiplying the probabilities along the branches of~\ref{edge:remove} and~\ref{noedge:bidirected} respectively. Hence symmetry holds, and irreducibility is trivially true in this case.

For the general case, note that the transitions described in Algorithm~\ref{alg:mcmc} involve either edge additions or deletions, so if $G, G'$ do not differ by only one edge addition, we have $P(G, G') = P(G', G) = 0$. Furthermore, we can always find a collection of graphs $G_1, \ldots, G_n$, such that irreducibility holds, e.g.\ by successively removing edges from $G$ until the graph is empty and then successively adding edges until we arrive at $G'$. Then we have $P(G_i, G_{i+1}) = p/2 > 0$ for all $1 \le i < n$ by the case considered above, and the claim follows.
\end{proof}

\bibliography{slbap}

\end{document}